\definecolor{mypink1}{rgb}{0.858, 0.188, 0.478}
\definecolor{mypink2}{RGB}{219, 48, 122}
\definecolor{mypink3}{cmyk}{0, 0.7808, 0.4429, 0.1412}
\definecolor{mygray}{gray}{0.6}
\newcommand{\KL}{\mathrm{KL}}
\newcommand{\supp}{\text{supp}}
\newcommand{\E}{\mathbb E}
\newcommand{\R}{\mathbb{R}}
\DeclareFontFamily{U}{mathx}{\hyphenchar\font45}
\DeclareFontShape{U}{mathx}{m}{n}{<-> mathx10}{}
\DeclareSymbolFont{mathx}{U}{mathx}{m}{n}
\DeclareMathAccent{\widebar}{0}{mathx}{"73}
\renewcommand{\todo}[2][]{\@todo[#1]{#2}}
\def\red#1{\textcolor{red}{#1}}
\newcommand{\normmm}[1]{{\left\vert\kern-0.25ex\left\vert\kern-0.25ex\left\vert #1
		\right\vert\kern-0.25ex\right\vert\kern-0.25ex\right\vert}}
\begin{document}

\title{ \bf Online Sparse Reinforcement Learning}
\author
{
	Botao Hao\thanks{Deepmind. E-mail: haobotao000@gmail.com.},~
	Tor Lattimore\thanks{Deepmind. E-mail:  lattimore@google.com.}
	,~
	Csaba Szepesv\'ari\thanks{Deepmind and University of Alberta. E-mail: szepi@google.com.}
	,~
	Mengdi Wang\thanks{Princeton University. E-mail: mengdiw@princeton.edu.}
}
\date{}
\maketitle

\begin{abstract}
We investigate the hardness of online reinforcement learning in  fixed horizon, sparse linear Markov decision process (MDP), with a special
focus on the high-dimensional regime where the ambient dimension is larger than the number of episodes.
Our contribution is two-fold. First, we provide a lower bound showing that linear regret is generally unavoidable in this case, even if there exists a policy that collects well-conditioned data. 
The lower bound construction uses an MDP with a fixed number of states while the number of actions scales with the ambient dimension. 
Note that when the horizon is fixed to one, the case of linear stochastic bandits, the linear regret can be avoided.
Second, we show that if the learner has oracle access to a policy that collects well-conditioned data 
then a variant of Lasso fitted Q-iteration enjoys a nearly dimension free regret
of $\tilde{O}( s^{2/3} N^{2/3})$ where $N$ is the number of episodes and $s$ is the sparsity level.
This shows that in the large-action setting, the difficulty of learning can be attributed to the difficulty of finding a good
exploratory policy.
\end{abstract}

\section{Introduction}\label{sec:intro}

Sparse models in classical statistics often yield the best of both worlds: high representation power is achieved by including many features while sparsity leads to efficient estimation. 
There is a growing interest in applying the tools developed by statisticians to sequential settings such as contextual bandits and reinforcement learning (RL). As we now explore, in
online RL this leads to a number of delicate trade-offs between assumptions and sample complexity.
The use of sparsity in reinforcement learning (RL) has been explored before in the context of policy evaluation or policy optimization 
in the batch setting \citep{kolter2009regularization, geist2011, hoffman2011regularized, painter2012greedy, ghavamzadeh2011finite, hao2020sparse}. 
As far as we know, there has been very little work on the role of sparsity in online RL.
In batch RL, the dataset is given a priori and the focus is typically on evaluating a given target policy or learning a near-optimal policy. By contrast, the central question in online RL is 
how to sequentially interact with the environment to balance the trade-off between exploration and exploitation, measured here by the cumulative regret. We ask the following question:
\begin{center}
\emph{Under what circumstances does sparsity help when minimizing regret in online RL?}
\end{center}
In sparse linear regression, the optimal estimation error rate generally scales polynomially with the sparsity $s$ and only logarithmically in the ambient dimension $d$ \citep{Wa19}. 
This is guaranteed under the sufficient and almost necessary condition that the data covariance matrix is well-conditioned, usually referred to restricted eigenvalue 
condition \citep{bickel2009simultaneous} or compatibility condition \citep{van2009conditions}.

The `almost necessary' nature of the conditions for efficient estimation with sparsity leads to an unpleasant situation when minimizing regret.
Even in sparse linear bandits, the worst-case regret is known to depend polynomially on the ambient dimension \citep[\S24.3]{lattimore2020bandit}.
The reason is simple. By definition, a learner with small regret must play mostly the optimal action, which automatically leads to poorly conditioned data.
Hence, making the right assumptions is essential in the high-dimensional regime where $d$ is large relative to the time horizon.
A number of authors have considered the contextual setting, where the regret can be made dimension free by making judicious assumptions on the context distribution
\citep{bastani2020online, wang2018minimax, kim2019doubly,ren2020dynamic, wang2020nearly}.  

When lifting assumptions from the bandit literature to RL it is essential to ensure that \textit{(a)} the assumptions still
help and \textit{(b)} the assumptions remain reasonable. In some sense, our lower bound shows that a typical assumption that helps in linear bandits is by itself insufficient in RL.
Specifically, in linear bandits the existence of a policy that collects well-conditioned data is sufficient for dimension free regret. In RL this is no longer true because finding this policy
may not be possible without first learning the transition structure, which cannot be done efficiently without well-conditioned data, which yields an irresolvable chicken-and-egg problem.

\paragraph{Contribution} 
We study online RL in episodic linear MDPs with ambient dimension $d$, sparsity $s$, episode length $H$ and number of episodes $N$.  Our contribution is two-fold:
\begin{itemize}
    \item Our first result is a lower bound showing that $\Omega(Hd)$ regret is unavoidable in the worse-case when the dimension is large, even if the MDP transition kernel can be exactly represented by a sparse linear model and there exists an exploratory policy that collects well-conditioned data. The technical contribution is to craft a new class of hard-to-learn episodic MDPs. To overcome 
    the difficulties caused by deterministic transitions from the constructed MDPs, we develop a novel stopping-time argument when calculating the KL-divergence.
\item Our second result shows that if the learner has oracle access to an exploratory policy that collects well-conditioned data, then online Lasso fitted-Q-iteration in combination with the 
explore-then-commit template achieves a regret upper bound of $\tilde{O}(H^{4/3}s^{2/3}N^{2/3})$. The proof requires a non-trivial extension of high-dimensional statistics to Markov dependent data. 
As far as we know, this is the first regret bound that has no polynomial dependency on the feature dimension $d$ in online RL.
\end{itemize}

\subsection{Related work}

Regret guarantees for online RL have received considerable attention in recent years. In episodic tabular MDPs with a homogeneous transition kernel, \cite{azar2017minimax} proved a minimax 
optimal regret of $O(\sqrt{H^2|\cS||\cA|N})$
achieved by a 
model-based algorithm. \cite{jin2018q} showed an $O(\sqrt{H^4|\cS||\cA|N})$ regret bound for Q-learning with inhomogeneous transition kernel. Under a linear MDP assumption, \cite{jin2019provably} showed an $O(\sqrt{d^3H^4N})$ regret bound for an optimistic version of least-squares value iteration. Under a linear kernel MDP assumption \citep{zhou2020provably}, \cite{yang2019reinforcement} obtained an $O(dH^{5/2}\sqrt{N})$ regret bound by a model-based algorithm while \cite{cai2019provably} obtained an $O(dH^2\sqrt{N})$ regret bound using an optimistic version of least-squares policy iteration. \cite{zanette2020frequentist} derived an $O(d^2H^{5/2}\sqrt{N})$ regret bound for randomized least-squares value iteration. None of these works considered sparsity, and consequentially the aforementioned regret bounds all have polynomial dependency on $d$. 

 \cite{JiKrAgLaSch17} and \cite{sun2019model} design algorithms for learning in RL problems with low Bellman/Witness rank, which includes sparse linear RL as a special case and obtain $O(\text{poly}(s, A, H, \log(d)))$ sample complexity where $A$ is the number of actions. More recently, FLAMBE \citep{AgKaKrSu20} achieves $O(\text{poly}(s, A, H, \log(d)))$ sample complexity in a low-rank MDP setting. It is worth mentioning that although the above results have no polynomial dependency on $d$, the sample complexity unavoidably involves polynomial dependency on the number of actions.

There are several previous works focusing on sparse linear/contextual bandits that can be viewed as a simplified online RL problem. \cite{abbasi2012online} proposed an 
online-to-confidence-set conversion approach that achieves an $O(\sqrt{sdN})$ regret upper bound, where $s$ is a known upper bound on the sparsity. The algorithm is not computationally-efficient, a deficit that
is widely believed to be unavoidable. A matching lower bound is also known, which means polynomial dependence on $d$ is generally unavoidable without additional assumptions \citep[\S24.3]{lattimore2020bandit}. More recently, under the condition that the feature vectors admit a well-conditioned exploration distribution, \cite{hao2020high} proved a dimension-free $\Omega(s^{1/3}N^{2/3})$ regret lower bound in the high-dimensional regime that can be matched by an explore-then-commit algorithm.
In the contextual setting, where the action set changes from round to round, several works 
imposed various of careful assumptions on the context distribution such that polynomial dependency on $d$ can be removed \citep{bastani2020online, wang2018minimax, kim2019doubly,ren2020dynamic, wang2020nearly}.
As far as we can tell, however, these assumptions are not easily extended to the MDP setting, where the contextual information available to the learner is not independent and identically distributed.

The use of feature selection in offline RL has also been investigated in a number of prior works. \cite{kolter2009regularization, geist2011, hoffman2011regularized, painter2012greedy, liu2012regularized} studied on-policy/off-policy evaluation with $\ell_1$-regularization for temporal-difference (TD) learning.  \cite{ghavamzadeh2011finite} and \cite{geist2012dantzig} proposed Lasso-TD to estimate the value function in Markov reward processes and derived finite-sample statistical analysis. However, the aforementioned results can not be extended to online setting directly. \cite{hao2020sparse} provided nearly optimal statistical analysis for sparse off-policy evaluation/optimization. One exception by \citet{ibrahimi2012efficient}, who 
derived an $O(p\sqrt{N})$ regret bound in high-dimensional sparse linear quadratic systems where $p$ is the dimension of the state space.

\section{Preliminaries}\label{sec:prelim}
\textbf{Notation.}
Denote by $\sigma_{\min}(X)$ and $\sigma_{\max}(X)$ the smallest and largest eigenvalues of a symmetric matrix $X$. Let $[n]=\{1,2,\ldots, n\}$. 
The relations $\lesssim$ and $\gtrsim$ stand for ``approximately less/greater than" and are used to omit constant and poly-logarithmic factors. We use $\tilde{O}(\cdot)$ to omit polylog factors. 
For a finite set $\cS$, let $\Delta_{\cS}$ be the set of probability distributions over $\cS$. 

\subsection{Problem definition}  
\paragraph{Episodic MDP.} A finite episodic Markov decision process (MDP) is a tuple $(\cX, \cA, H, P, r)$ with $\cX$ the state-space, $\cA$ the action space, $H$ the episode length, $P : \cX \times \cA \to \Delta_\cX$ the transition kernel and $r : \cX \times \cA \to [0,1]$ the reward function. As is standard, we assume that $\cX$ and $\cA$ are finite and that the reward function is known.
We write $P(x' | x, a)$ for the probability of transitioning to state $x'$ when taking action $a$ in state $x$.
A learner interacts with an episodic MDP as follows. In each episode, an initial state $x_1$ is sampled from an initial distribution $\xi_0\in\Delta_{\cX}$. Then, in each step $h \in [H]$, the 
learner observes a state $x_h \in \cX$, takes an action $a_h \in \cA$, and receives a deterministic reward $r(x_h, a_h)$. Then, the system evolves to a random next state $x_{h+1}$ according 
to distribution $P(\cdot | x_h,a_h)$. The episode terminates when $x_{H+1}$ is reached.

We define a (stationary) policy as a
function $\pi:\cX\to \Delta_{\cA}$, that maps states to distributions over actions. A nonstationary policy is a sequence of maps from histories to probability distributions over actions.  For each $h\in [H]$ and policy $\pi$, the value function $V_h^{\pi}: \cX \to \mathbb{R} $ is defined as the expected value of cumulative  rewards received under policy $\pi$ when starting from an arbitrary state at $h$th step; that is,
\begin{equation*}
 V^{\pi}_{h}(x) := \E^{\pi}\left[\sum_{h' = h}^H r(x_{h'}, a_{h'}))  \bigg | x_h = x\right]\,,
\end{equation*}
where $a_{h'}\sim \pi(\cdot|x_{h'}), x_{h'+1} \sim P(\cdot|x_{h'}, a_{h'})$, and $\mathbb E^{\pi}$ denotes the expectation over the sample path generated under policy $\pi$.
Accordingly, we also define the action-value function  $Q^{\pi}_{h}:\cX \times \cA \to \mathbb{R}$ which gives the expected cumulative  reward when the learner starts  from an  arbitrary state-action pair at the $h$th step and follows  policy $\pi$ afterwards:
\begin{equation*}
\begin{split}
    &Q^{\pi}_{h}(x,a):= r(x, a) +  \E^{\pi}  \left[\sum_{h' = h+1}^H r(x_{h'},a_{h'}) \bigg  | x_h = x, a_h = a \right]\,.
\end{split}
\end{equation*}
Note, the conditioning in the above definitions is not quite innocent. In this form the value function is not well defined for states $x$ that are not reachable by
a given policy. This is easily rectified by defining the value function in terms of the Bellman equation or by being more rigorous about the probability space. 
The above definitions are standard in the literature and are left as is for reader's convenience.

\paragraph{Bellman equation.} Since the action space and episode length are both finite, there always exists 
 an optimal policy $\pi^*$
 which gives the optimal value $V^*_h(x) = \sup_{\pi} V_h^\pi(x)$ for all $x\in \cX$ and $h\in [H]$ \citep{puterman2014Markov, sze10}. We denote the Bellman operator as
 \begin{equation*}
      [\cT V](x, a):= r(x,a)+\E_{x' \sim P(\cdot|x, a)}[V(x')]\,,
 \end{equation*}
and the Bellman equation for policy $\pi$ becomes 
\begin{equation}\label{eqn:bellman_operator}
\begin{split}
   & Q^{\pi}_h(x, a) = [\cT V_{h+1}^{\pi}](x, a)\,,\\
   &V_h^{\pi}(x) = \mathbb E_{a\sim \pi(\cdot|x)}[Q_h^{\pi}(x, a)],
     V_{H+1}^{\pi}(x) = 0\,,  
\end{split}
\end{equation}
which holds for all  $(x,a) \in \cX \times \cA $. Similarly, the Bellman optimality equation is 
\begin{equation}\label{eqn:bellman_policy}
    \begin{split}
         &Q^*_h(x, a) = [\cT V^*_{h+1}](x, a)\,,\\
         & V^*_h(x) = \max_{a\in\cA}Q^*_h(x, a),  V^*_{H+1}(x) = 0\,. 
    \end{split}
\end{equation}

\paragraph{Cumulative regret.} In the online setting, the learner aims to minimize the cumulative regret by interacting with the environment over a number of episodes. At the beginning of the $n$th episode, an initial  state $x^n_1$ is sampled from $\xi_0$ and the agent executes policy $\pi_n$. 
We measure the performance of the learner over $N$ episodes by the cumulative regret:
\begin{equation}\label{def:regret}
R_N= \sum_{n=1}^N \left(V^*_1 (x_1^n) - V^{\pi_n}_1 (x_1^n)\right)\,.
\end{equation}
The cumulative regret measures the expected loss of following the policy produced by the learner instead of the optimal policy. Therefore, the learner aims to follow a sequence of policies $\pi_1,\ldots, \pi_N$ such that the cumulative regret is
minimized.

\subsection{Sparse linear MDPs} 
Before we introduce sparse linear MDPs, we need to settle on a definition of a linear MDP. 
Let $\phi: \cX \times \cA \to \RR^d$ be a feature map which assigns to each state-action pair a $d$-dimensional feature vector. 
A feature map combined with a parameter vector $w\in \RR^d$ gives rise to the linear function $g_w:\cX\times \cA \to \RR$ defined by $g_w(x,a) = \phi(x,a)^\top w$
and the subspace  $\cG_\phi = \{ g_w \,:\, w\in \RR^d \}\subseteq \RR^{\cX \times \cA}$. 
Given a policy $\pi$ and function $f : \cX \times \cA \to \R$, let $\tilde \cT_\pi f : \cX \times \cA \to \R$ be the function defined by
\begin{align*}
[\tilde{\cT}_\pi f](x,a) = r(x,a) + \mathbb E_{x'\sim P(\cdot|x,a), a\sim \pi(a|x')}[f(x', a)]\,.
\end{align*}
We call an MDP \emph{linear} if $\cG_\phi$ is closed under $\tilde \cT_\pi$ for all policies $\pi$.\footnote{A different definition is called linear kernel MDP that the MDP transition kernel can be parameterized by a small number of parameters \citep{yang2019reinforcement, cai2019provably, zanette2020frequentist, zhou2020provably}}  
\cite{yang2019sample} and \cite{jin2019provably} have shown that this is equivalent to assuming 
$$
P(x'|x,a) = \sum_{k\in [d]} \phi_k(x,a) \psi_k(x')\,,
$$
for some functions $\psi_1,\ldots,\psi_d:\cX\to\mathbb R$ and all pairs of $(x,a)$.
Note, the feature map $\phi$ is always assumed to be known to the learner.
As far as we know, this notion of linearity was introduced by \citet{BeKaKo63,SchSe85}, who were motivated by the problem of efficiently computing the optimal policy for a known MDP with a large
state-space.

When little priori information is available on how to choose the features, agnostic choices often lead to dimensions which can be as large as the number of episodes $N$. 
Without further assumptions, no procedure can achieve nontrivial performance guarantees, even when just considering simple prediction problems (e.g., predicting immediate rewards). 
However, effective learning with many more features than the sample-size is possible when only $s\ll d$ features are relevant.
This motivates our definition of a sparse linear MDP.

\begin{definition}[{\bf Sparse linear MDP}]	\label{def:sparse_MDP}
Fix a feature map  $\phi:\cX \times \cA \to \RR^d$ and assume the episodic MDP $\cM$ is linear in $\phi$.
We say $\cM$ is $(s,\phi)$-sparse if there exists an active set $\cK\subseteq [d]$ with $|\cK|\le s$ and some functions $\psi(\cdot) = (\psi_k(\cdot))_{k\in\cK}$ such that for all pairs of $(x,a)$:
$$P(x' |x,a) = \sum_{k\in \cK} \phi_k(x,a) \psi_k(x')\,.$$
\end{definition}

\section{Hardness of online sparse RL}\label{sec:lower_bound}
In this section we illustrate the fundamental hardness of online sparse RL in the high-dimensional regime by proving a minimax regret lower bound. 
The high-dimensional regime is  referred to $N\leq d$. We first introduce a notion of an exploratory policy.

\begin{definition}[{\bf Exploratory policy}]
\label{def:good_exploratory}
Let $\Sigma^{\pi}$ be the expected uncentered covariance matrix induced by policy $\pi$ and feature map $\phi$, given by
\begin{equation}\label{eqn:expected_cov}
    \Sigma^{\pi}:=\mathbb E^{\pi}\left[\frac{1}{H}\sum_{h=1}^{H}\phi(x_{h},a_{h})\phi(x_{h},a_{h})^{\top}\right] \,,
\end{equation} 
where $x_1\sim \xi_0, a_h\sim \pi(\cdot|x_h), x_{h+1}\sim P(\cdot|x_h, a_h)$ and $\mathbb E^{\pi}$ denotes expectation over the sample path generated under policy $\pi$. We call a policy $\pi$  \emph{exploratory} if $  \sigma_{\min}(\Sigma^{\pi})>0$.
\end{definition}

\begin{remark}
Intuitively, $\sigma_{\min}(\Sigma^{\pi})$ characterizes how well the policy $\pi$ explores in the feature space. Similar quantities also appear in the assumptions in the literature 
to ensure the success of policy evaluation/optimization with linear function approximation \citep[assumption A.4]{abbasi2019politex}, \citep[theorem 2]{duan2020minimax}, \citep[assumption A.3]{lazic2020maximum},
\citep[assumption A.3]{abbasi2019exploration} and \citep[assumption 6.2]{agarwal2020theory}.
\end{remark}
\begin{remark}
In the tabular case, we can choose $\phi(x, a)$ as a basis vector in $\R^{|\cX| \times |\cA|}$. Let $\mu^\pi(x,a)$ be the frequency of visitation for state-action pair $(x,a)$ under policy $\pi$ and initial distribution $\xi_0$: 
\begin{equation*}
    \mu^{\pi}(x,a) = \frac{1}{H}\sum_{h=1}^H\mathbb E^{\pi}\big[\mathbb I((x_h, a_h)=(x, a))\big].
\end{equation*}
Then $\sigma_{\min}(\Sigma^{\pi})>0$ implies $\min_{x,a} \mu^{\pi}(x,a)>0$. This means an exploratory policy in the tabular case will have positive visitation probability of each state-action pair.
\end{remark}

The next theorem is a kind of minimax lower bound for online sparse RL. 
The key steps of the proof follow, with details and technical lemmas deferred to the appendix.
\begin{theorem}[{\bf Minimax lower bound in high-dimensional regime}]\label{thm:lower_bound}
For any algorithm $\pi$, there exists a sparse linear MDP $\cM$ and associated exploratory policy $\pi_e$ for which $\sigma_{\min}(\Sigma^{\pi_e})$ is a strictly positive universal constant 
independent of $N$ and $d$, such that for any $N\leq d$,
\begin{equation*}
   R_N \geq \frac{1}{128} Hd\,.
\end{equation*}
\end{theorem}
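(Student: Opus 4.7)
The plan is to construct a family of sparse linear MDPs $\{\cM_j\}_{j\in[d]}$ indexed by a hidden ``correct'' action $j$, and then apply an information-theoretic argument to show that any algorithm interacting with a random member of this family incurs $\Omega(Hd)$ regret. The instance will have a constant number of states but $d$ actions, so that identifying $j$ essentially requires ``trying'' each action once from the initial state.

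\textbf{Construction.} Each $\cM_j$ would have three states $\cX = \{x_0, x_g, x_\perp\}$ and action space $\cA = [d]$. The initial state is $x_0$; from $x_0$, action $j$ transitions deterministically to an absorbing ``good'' state $x_g$ (reward $1$ per subsequent step), while every other action transitions deterministically to an absorbing ``bad'' state $x_\perp$ (reward $0$). Choosing features so that $\phi(x_0,a)\propto \sqrt{d}\, e_a$, together with a small number of state-indicator coordinates for $x_g$ and $x_\perp$, the transition kernel can be written as $P(x'\mid x,a)=\sum_{k\in\cK_j}\phi_k(x,a)\psi_k(x')$ with $|\cK_j|=O(1)$, so $\cM_j$ is $(s,\phi)$-sparse in the sense of Definition~\ref{def:sparse_MDP}. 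The optimal value from $x_0$ is $V_1^*(x_0)=H-1$, attained by playing $a=j$ at the first step; any suboptimal first action yields zero reward for the entire episode, an $\Omega(H)$ gap.

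\textbf{Exploratory policy.} For every $\cM_j$ the uniform policy on $\cA$ at $x_0$ would serve as $\pi_e$: the $\sqrt{d}$ scaling of the features gives an expected single-step outer product of $I_d$ at $x_0$, so after averaging by $1/H$ one obtains $\sigma_{\min}(\Sigma^{\pi_e})\ge 1/H$, a universal constant independent of $N$ and $d$. Hence the exploratory-policy hypothesis of the theorem is met.

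\textbf{Information-theoretic lower bound and the main obstacle.} Let $\bbP_j$ be the law of the $N$-episode interaction history of the algorithm on $\cM_j$, and $\bbP_0$ a reference null instance in which every action from $x_0$ leads to $x_\perp$. A Le~Cam/Fano-style argument reduces the regret bound to showing $\sum_{j=1}^d \KL(\bbP_0\Vert\bbP_j)\lesssim N$: for $N\le d$ this implies that for a uniformly random $j$ the learner cannot identify $j$, so in a constant fraction of its episodes it plays a first action different from $j$, each contributing an $\Omega(H)$ gap and yielding $R_N\ge\Omega(Hd)$. The main technical obstacle is that the transitions of $\cM_j$ are deterministic: the standard chain-rule decomposition $\KL(\bbP_j\Vert\bbP_0)=\sum_h\E[\KL(P_j(\cdot\mid x_h,a_h)\Vert P_0(\cdot\mid x_h,a_h))]$ is undefined, because whenever the learner plays action $j$ at $x_0$ the two one-step kernels are mutually singular.

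To overcome this, I would introduce a stopping time $\tau_j$ equal to the first (episode index, step) pair at which action $j$ is played at $x_0$, and decompose the log-likelihood ratio along the filtration only up to and including $\tau_j$. Before $\tau_j$ the two laws agree exactly under any algorithm's policy (both remain at $x_0$ under the same deterministic dynamics), so the KL contribution is zero; at $\tau_j$ the contribution is a bounded, single-bit event (move to $x_g$ versus $x_\perp$) whose probability under $\bbP_0$ can be controlled. Summing over episodes and $j$, and using the crucial combinatorial fact that a single episode can trigger $\tau_j$ for at most one value of $j$, gives $\sum_j\bbP_0(\tau_j<\infty)\le N$ and hence $\sum_j\KL(\bbP_0\Vert\bbP_j)\lesssim N$, completing the proof when fed into Fano. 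I expect the care required to make this stopping-time chain-rule decomposition rigorous (verifying absolute continuity on the pre-$\tau_j$ filtration and properly truncating the log-likelihood at $\tau_j$) to be the most delicate part of the argument.
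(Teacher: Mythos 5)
Your stopping-time treatment of the deterministic transitions is in the right spirit (it mirrors the role of $\tau_k$ in the paper's argument), but the construction itself does not meet the hypotheses of the theorem, and this is a genuine gap rather than a cosmetic one. The claim that the uniform policy is exploratory with $\sigma_{\min}(\Sigma^{\pi_e})\ge 1/H$ rests entirely on scaling the initial-state features by $\sqrt{d}$, i.e.\ on letting $\|\phi\|_\infty$ grow with the dimension. ``Well-conditioned'' is only meaningful under a fixed normalization (the paper works with $\|\phi(x,a)\|_\infty\le 1$, as in Theorem \ref{thm:regret_lasso_FQI}); once you normalize, your features at $x_0$ are $1$-sparse basis vectors, $x_0$ is visited once per episode, and for \emph{every} policy $e_a^\top\Sigma^\pi e_a=\tfrac1H\Prob{a_1=a}$ for some action $a$ with $\Prob{a_1=a}\le 1/d$, so $\sigma_{\min}(\Sigma^{\pi})\le \tfrac{1}{Hd}$. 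Thus no policy in your instance collects well-conditioned data in a dimension-free sense, and the theorem's hypothesis fails. A consistency check makes the problem vivid: in your instance $s=O(1)$ and the purported exploratory policy (uniform at $x_0$) is \emph{known} to the learner, so if it really had constant restricted eigenvalue, Theorem \ref{thm:regret_lasso_FQI} would give $\tilde O(H^{4/3}N^{2/3})$ regret, contradicting your claimed $\Omega(H\min(N,d))$ bound; the only thing blocking that contradiction is the unbounded $\sqrt d$ feature scaling, which is precisely where the proposal breaks.

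The paper's construction is built to avoid exactly this trap. It uses five states, with an \emph{informative} state $x_{\text{i}}$ whose action features form a dense $\{\pm1\}$ hypercube ($\cH$), so that a policy which reaches $x_{\text{i}}$ and randomizes there attains constant $\sigma_{\min}(\Sigma^{\pi_e})$ with features bounded by $1$; but reaching $x_{\text{i}}$ requires finding the single special action among $d$ at $x_0$ (your needle-in-haystack component), and before the stopping time the learner is confined to the uninformative state $x_{\text{u}}$, where it faces a sparse linear bandit with $(s-1)$-sparse features and must be compared against a perturbed parameter $\tilde\theta^{(k)}=\theta+2\varepsilon\tilde z^{(k)}$; the regret comes from this bandit subproblem (Claims \ref{claim:regret_lower} and \ref{claim:KL_bound}), not from missing an $\Omega(H)$-gap action. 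In particular, in the paper the exploratory policy exists but is instance-dependent and hard to find, whereas in your instance it would be trivially available to the learner — a sign that the construction cannot witness the intended separation. To repair your proposal you would need to add something playing the role of the informative state (dense, bounded features reachable only through the hidden action) and then lower-bound the regret accrued before the corresponding stopping time, which is essentially the paper's Steps 1--4.
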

This theorem states that even if the MDP transition kernel can be exactly represented by a sparse linear model and there exists an exploratory policy, 
the learner could still suffer linear regret in the high-dimensional regime. This is in stark contrast to linear bandits, where the existence of an exploratory policy is sufficient for
dimension-free regret. The problem in RL is that \emph{finding} the exploratory policy can be very hard.

\begin{proof}[Proof of Theorem~\ref{thm:lower_bound}]
The proof uses the standard information theoretic machinery, but with a novel hard-to-learn MDP construction and KL divergence calculation based on a stopping time argument. The intuition is to construct an informative state with only one of a large set of actions leading to the informative state deterministically. And the exploratory policy has to visit that informative state to produce well-conditioned data. In order to find this informative state, the learner should take a large number of trials that will suffer high regret.

\begin{figure}[h]\label{fig:MDP}
\centering
\includegraphics[scale=0.6]{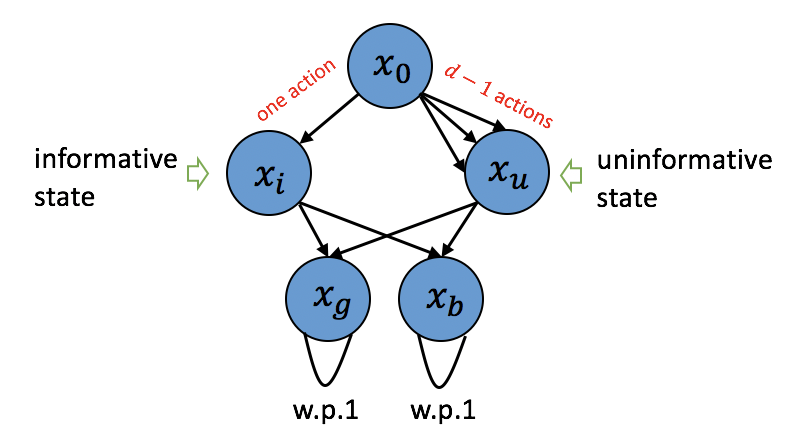}
\caption{A hard-to-learn  MDP instance that includes an informative state and an uninformative state.}
\end{figure}
\paragraph{Step 1: Construct a set of hard MDP instances.} Let the state space $\cX$ consists of $\{x_0, x_{\text{i}},x_{\text{u}}, x_{\text{g}}, x_{\text{b}}\}$. Here, $x_0$ is the initial state, $x_{\text{i}}$ and $x_{\text{u}}$ refer to informative and uninformative states, $x_{\text{g}}$ and $x_{\text{b}}$ refer to high-reward and low-reward states. Construct $d$ different hard MDP instances: $\{\cM_1,\ldots, \cM_d\}$ and they only differ at which action brings the learner to $x_{\text{i}}$. For each MDP $\cM_k, k\in[d]$,
\begin{equation}\label{eqn:theta}
    \theta = \big(\underbrace{\varepsilon, \ldots, \varepsilon}_{s-1}, 0, \ldots, 0, 1/2\big)^{\top} \,,
\end{equation}
where $\varepsilon>0$ is a small constant to be tuned later, and $\bar{\theta}^{(k)}\in\mathbb R^{2d+2}$ as
\begin{equation}\label{def:theta_k}
    \bar{\theta}^{(k)} = (\theta^{\top}, 1, 1,  \underbrace{0, \ldots, 0}_{k-1}, 1, \underbrace{0, \ldots, 0}_{d-k})^{\top}\,.
\end{equation}
We specify the transition probability of $\cM_k$ in the following steps:
\begin{enumerate}
    \item Let the initial state $x_0$ associated with $d$ actions as $\cA_1 = \{a_1^0, \ldots, a_d^0\}$.
    The transitions from $x_0$ to either $x_{\text{i}}$ or $x_{\text{u}}$ are \emph{deterministic}. In MDP $\cM_k$, only taking action $a_k^0$ brings the learner to $x_{\text{i}}$, and taking any other action \emph{except} $a_k^0$ brings the learner to $x_{\text{u}}$. This information is parameterized into the last $d$ coordinates of $ \bar{\theta}^{(k)}$. Specifically, let 
    \begin{equation*}
    \begin{split}
         &\phi(x_0, a_k^0) = (\underbrace{0,\ldots, 0}_{d+2},\underbrace{0,\ldots,0}_{k-1}, 1, \underbrace{0,\ldots, 0}_{d-k}, 1)\in\mathbb R^{2d+3},\\
         &\phi(x_0, a_j^0) = (\underbrace{0,\ldots, 0}_{d+2},\underbrace{0,\ldots,0}_{j-1}, 1, \underbrace{0,\ldots, 0}_{d-j}, 1)\in\mathbb R^{2d+3}.
    \end{split}
    \end{equation*}
for $j\in[d]$ but $j \neq k$. In addition, we let $\psi(x_{\text{i}})=(\bar{\theta}^{(k)\top}, 0)\in\mathbb R^{2d+3}$ and $\psi(x_{\text{u}})=(-\bar{\theta}^{(k)\top}, 1)\in\mathbb R^{2d+3}$. Now we can verify for $a_k^0$:
\begin{equation*}
    \begin{split}
       & \mathbb P(x_{\text{u}}|x_{0}, a_k^0) = \phi(x_0, a_k^0)^{\top}\psi(x_{\text{u}}) = 0,\\
       &  \mathbb P(x_{\text{i}}|x_{0}, a_k^0) = \phi(x_0, a_k^0)^{\top}\psi(x_{\text{i}}) = 1,
    \end{split}
\end{equation*}
and for $a_j^0$ $(j\neq k)$:
\begin{equation*}
    \begin{split}
       & \mathbb P(x_{\text{u}}|x_{0}, a_j^0) = \phi(x_0, a_j^0)^{\top}\psi(x_{\text{u}}) = 1,\\
       &  \mathbb P(x_{\text{i}}|x_{0}, a_j^0) = \phi(x_0, a_j^0)^{\top}\psi(x_{\text{i}}) = 0,
    \end{split}
\end{equation*}

    \item We construct a feature set $\cS$ associated to $x_{\text{u}}$ and a feature set $\cH$ associated to $x_{\text{i}}$:
\begin{equation*}\label{eqn:action_set}
    \begin{split}
        \cS = \Big\{z\in\mathbb R^{d}\Big|&z_d = 0,z_j\in\{-1, 0, 1\} \ \text{for} \ j\in[d-1], \|z\|_1=s-1 \Big\}\,, \\
        \cH = \Big\{z\in\mathbb R^{d}\Big|&z_j\in\{-1, 1\} \ \text{for} \ j\in[d-1],z_d = 1\Big\} \,.
    \end{split}
\end{equation*}
Let $\cA_2 = \{a_1^{\text{u}},\ldots, a^{\text{u}}_{|\cS|}\}$ be the action set associated with $x_{\text{u}}$ and  $\cA_3 = \{a_1^{\text{i}},\ldots, a^{\text{i}}_{|\cH|}\}$ be the action set associated with $x_{\text{i}}$. We write $\varphi(x_{\text{u}}, a_j^{\text{u}})$ as the $j$th element in $\cS$ and $\varphi(x_{\text{i}}, a_j^{\text{i}})$ as the $j$th element in $\cH$. Denote 
$$
\phi(x_{\text{i}}, a_j^{\text{i}}) = (\varphi(x_{\text{i}}, a_j^{\text{i}})^{\top}, \underbrace{0, \ldots, 0}_{d+2}, 1)^{\top}\in\mathbb R^{2d+3}
$$ and $\psi(x_b) = (\bar{\theta}^{(k)\top}, 0)\in\mathbb R^{2d+3}$, $\psi(x_g) = (-\bar{\theta}^{(k)\top}, 1)\in\mathbb R^{2d+3}$. At informative state $x_{\text{i}}$, the learner can take
action $a_j^{\text{i}}\in\cA_3$ and transits to either $x_{\text{g}}$ or $x_{\text{b}}$ according
to
\begin{equation*}
    \begin{split}
     &P(x_{\text{g}}|x_{\text{i}}, a_j^{\text{i}}) = \varphi(x_{\text{i}}, a_j^{\text{i}})^{\top}\theta = \phi(x_{\text{i}}, a_j^{\text{i}})^{\top}\psi(x_g), \\ &P(x_{\text{b}}|x_{\text{i}}, a_j^{\text{i}}) = 1- \varphi(x_{\text{i}}, a_j^{\text{i}})^{\top}\theta = \phi(x_{\text{i}}, a_j^{\text{i}})^{\top}\psi(x_b)\,,
\end{split}
\end{equation*}
that satisfy the sparse linear MDP assumption. We specify $\phi,\psi$ similarly when the learner at $x_{\text{u}}$.
\item At $x_{\text{g}}$  or $x_{\text{b}}$, the learner will stay the current state for the rest of current episode no matter what actions to take.
\end{enumerate}
One can verify that the above construction so far satisfies the sparse linear MDP assumption in Definition \ref{def:sparse_MDP}. In the end, the reward function is set to be $r(x, a) = 1$ if $x=x_{\text{g}}$ and $r(x,a) = 0$ otherwise. We now finish the construction of all the essential ingredients of $\{\cM_1,\ldots, \cM_d\}$. 

\begin{remark}
For $\cM_k$, the overall action set will be $\cA_1\cup\cA_2\cup \cA_3$. Now we specify the transitions that we have not mentioned so far. At $x_0$, all the actions from $\cA_2$ and $\cA_3$ bring the learner to $x_{\text{u}}$. At $x_{\text{i}}$, all the actions from $\cA_1$ and $\cA_2$ bring the learner to either $x_{\text{g}}$ or $x_{\text{b}}$ with the same probability as $a_1^{\text{i}}$. At $x_{\text{u}}$, actions from $\cA_1$ and $\cA_3$ bring the learner to either $x_{\text{g}}$ or $x_{\text{b}}$ with the same probability as $a_1^{\text{u}}$. 
\end{remark}

\paragraph{Step 2: Construct an alternative set of MDPs.} For each $k\in[d]$, the second step is to construct an alternative MDP $\tilde{\cM}_k$ that is hard to distinguish from $\cM_k$ and
for which the optimal policy for $\cM_k$ is suboptimal for $\tilde{\cM}_k$ and vice versa. 
Fix a sequence of policies $\{\pi_1, \ldots, \pi_N\}$. 
Let $\cD_n =(S_1^n, A_1^n,\ldots, S_H^n, A_H^n)$ be the sequence of state-action pairs in $n$th episode produced by $ \pi_n$. Define $\cF_h^n = \sigma(\cD_1,\ldots,\cD_{h-1}, S_{1}^n, A_{1}^n,\ldots, S_{h-1}^n, A_{h-1}^n, S_h^n)$. Let $\mathbb F = (\cF_{h}^n)_{h\in[H], n\in[N]}$ be a filtration. Define the stopping time with respect to $\mathbb F$:
\begin{equation*}
\begin{split}
\tau_k = N\wedge \min\Big\{n:A_{1}^{n} =a_k^0\Big\}
\end{split}
\end{equation*}
that is the first episode the learner reaches the informative state. In other words, for $n\leq \tau_k-1$, the learner always transits to $x_{\text{u}}$ from $x_0$. At  $x_{\text{u}}$, the learner acts similarly as facing linear bandits where the number of arms is $|\cS|$.

For $k\in[d]$, let $\mathbb P_k, \tilde{\mathbb P}_k $ be the laws of $\cD_1,\ldots,\cD_{\tau_k-1}$ induced by the interaction of $\{\pi_1, \ldots, \pi_N\}$ and $\cM_k, \tilde{\cM}_k$ accordingly. Let $\mathbb E_k, \tilde{\mathbb E}_k$ be the corresponding expectation operators. In addition, denote a set $\cS'$ as 
\begin{equation}\label{def:S_prime}
\begin{split}
     \cS' = &\Big\{z\in\mathbb R^d\Big|\|z\|_1=s-1,z_j\in\{-1, 0, 1\} \ \text{for} \ j\in\{s,s+1,\ldots,d-1\},\\
     &z_j = 0 \ \text{for} \ j=\{1,\ldots, s-1, d\}\Big\} \,.
\end{split}
\end{equation}
Then we let
\begin{equation}\label{def:x_tilde}
   \tilde{z}^{(k)} = \argmin_{z\in\cS'}\mathbb E_k \left[\sum_{n=1}^{\tau_k} \big \langle \varphi(S_2^n, A_2^n), z \big\rangle^2\right] \,,
\end{equation}
and construct the alternative $\tilde{\theta}^{(k)} = \theta + 2\varepsilon \tilde{z}^{(k)}$ where $\varepsilon$ appears in Eq.~\eqref{eqn:theta}. This is in contrast with $\bar{\theta}^{(k)}$ in Eq.~\eqref{def:theta_k} that specifies the original MDP $\cM_k$. All the other ingredients of $\tilde{\cM}_k$ are the same as $\cM_k$. Thus, we have constructed an alternative set of MDPs.

\paragraph{Step 3: Regret decomposition.}
Let $R_{N}(\cM_k)$ be the cumulative regret of a sequence of policies $\{\pi_1,\ldots, \pi_N\}$ interacting with MDP $\cM_k$ for $N$ episodes. Recall that from the definition in Eq.~\eqref{def:regret}, we have
\begin{equation*}
    R_N(\cM_k)= \sum_{n=1}^N \Big(V^{\pi^*}_1 (x_1^n) - V^{\pi_n}_1 (x_1^n)\Big)\,.
\end{equation*}
Denote $a^* = \argmax_{a_j^{\text{u}}\in\cA_2} \varphi(x_{\text{u}}, a_j^{\text{u}})^{\top}\theta$ be the optimal action when the learner is at $x_{\text{u}}$. The optimal policy $\pi^*$ of MDP $\cM_k$ behaves in the following way for each episode:
\begin{itemize}
    \item At state $x_0$, the optimal policy takes an arbitrary action except $a_k^0$ to state $x_{\text{u}}$. There is no reward collected so far.
    \item  At state $x_{\text{u}}$, the optimal policy takes action $a^*$ and transits to good state $x_{\text{g}}$ with probability $\varphi(x_{\text{u}}, a^*)^{\top}\theta$ or bad state $x_{\text{b}}$ with probability $1-\varphi(x_{\text{u}}, a^*)^{\top}\theta$.
    \item The learner stays at the current state for the rest of  current episode.
\end{itemize}
Then the value function of $\pi^*$ at $n$th episode  is  
\begin{equation*}
\begin{split}
    V^{\pi^*}_1 (x_1^n)&= (H-1)\mathbb P(A_2^n = a^*)\\
    &=(H-1)\varphi(x_{\text{u}}, a^*)^{\top}\theta = (H-1)(s-1)\varepsilon\,.
\end{split}
\end{equation*}
We decompose $R_N(\cM_k)$ according to the stopping time $\tau_k$:
\begin{equation}\label{eqn:R}
\begin{split}
    &R_N(\cM_k) \geq  \sum_{n=1}^{\tau_k-1} \Big(V^{\pi^*}_1 (x_1^n) - V^{\pi_n}_1 (x_1^n)\Big) \\
    &\geq \frac{H}{8} \mathbb E_k\Big[\tau_ks\varepsilon - \sum_{n=1}^{\tau_k-1}\langle \varphi(S_2^n, A_2^n), \theta\rangle\Big]\\
     & = \frac{H}{8} \mathbb E_k\Big[\tau_ks\varepsilon - \sum_{n=1}^{\tau_k-1}\sum_{j=1}^{s-1} \varphi_j(x_{\text{u}}, A_2^n)\varepsilon\Big]\,,
\end{split}
\end{equation}
where the last equation is due to $S_2^n$ is always $x_{\text{u}}$ until the stopping time $\tau_k$.

Define an event 
    \begin{equation*}
        \cD_k = \left\{\sum_{n=1}^{\tau_k-1} \sum_{j=1}^{s-1}\varphi_j(x_{\text{u}}, A_2^n)\leq \frac{\tau_ks}{2}\right\} \,.
\end{equation*}
The next claim shows that when $\cD_k$ occurs, the regret is large in MDP $\cM_k$, while if it does not occur, then the regret is large in MDP $\smash{\tilde \cM}_k$.
The detailed proof is deferred to
Appendix \ref{sec:claim_regret_lower}.
\begin{claim}\label{claim:regret_lower} 
Regret lower bounds with respect to event $\cD_k$:
\begin{equation*}
    \begin{split}
       R_N(\cM_k)+&R_N(\tilde{\cM}_k)\geq\frac{Hs\varepsilon}{8}\Big(\mathbb E_k[\tau_k]+\tilde{\mathbb E}_{k}[\tau_k\mathbb I(\cD_k^c)]-\mathbb E_{k}[\tau_k\mathbb I(\cD_k^c)]\Big)\,. \\
    \end{split}
\end{equation*}
\end{claim}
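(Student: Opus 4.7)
I would split the target inequality into two pieces by exploiting the identity $\mathbb E_k[\tau_k]-\mathbb E_k[\tau_k\mathbb I(\cD_k^c)]=\mathbb E_k[\tau_k\mathbb I(\cD_k)]$. The claim then reduces to the two separate bounds
\begin{equation*}
R_N(\cM_k)\;\geq\;\tfrac{Hs\varepsilon}{8}\,\mathbb E_k[\tau_k\mathbb I(\cD_k)]
\qquad\text{and}\qquad
R_N(\tilde{\cM}_k)\;\geq\;\tfrac{Hs\varepsilon}{8}\,\tilde{\mathbb E}_k[\tau_k\mathbb I(\cD_k^c)],
\end{equation*}
whose sum gives the stated inequality. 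The first bound traps the regret on the event where the learner is behaving badly under $\cM_k$, while the second bound handles the complementary event by playing the learner off against the alternative $\tilde{\cM}_k$.

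The first bound is immediate from Eq.~\eqref{eqn:R}. Inserting the indicator $\mathbb I(\cD_k)$ inside $\mathbb E_k$ and recalling the definition $\cD_k=\bigl\{\sum_{n=1}^{\tau_k-1}\sum_{j=1}^{s-1}\varphi_j(x_{\text{u}},A_2^n)\le\tau_k s/2\bigr\}$, the integrand $\tau_k s\varepsilon-\varepsilon\sum_n\sum_j\varphi_j$ is pointwise at least $\tau_k s\varepsilon/2$ on $\cD_k$. Multiplying by the $H/8$ prefactor yields the target (up to an absorbable factor of two).

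For the second bound, I would mirror the decomposition of \eqref{eqn:R} under $\tilde{\mathbb P}_k$. Because $\tilde\theta^{(k)}=\theta+2\varepsilon\tilde z^{(k)}$ with $\tilde z^{(k)}\in\cS'$ supported on $\{s,\ldots,d-1\}$, the optimal value at $x_{\text{u}}$ in $\tilde{\cM}_k$ equals $(H-1)\max_\varphi \varphi^\top\tilde\theta^{(k)}=2(H-1)(s-1)\varepsilon$, achieved by aligning $\varphi$ with $\tilde z^{(k)}$. The per-episode regret of playing any $\varphi\in\cS$ is therefore $(H-1)\bigl(2(s-1)\varepsilon-\varphi^\top\theta-2\varepsilon\,\varphi^\top\tilde z^{(k)}\bigr)$, and since $\varphi^\top\theta\le(s-1)\varepsilon$ uniformly on $\cS$, this is at least $(H-1)\bigl((s-1)\varepsilon-2\varepsilon\,\varphi^\top\tilde z^{(k)}\bigr)$. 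Summing over $n<\tau_k$ inside $\tilde{\mathbb E}_k[\mathbb I(\cD_k^c)\,\cdot\,]$ produces the leading $(s-1)\varepsilon\,\tilde{\mathbb E}_k[\tau_k\mathbb I(\cD_k^c)]$ term.

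The delicate step is controlling the cross term $\tilde{\mathbb E}_k\bigl[\mathbb I(\cD_k^c)\sum_{n<\tau_k}\varphi(S_2^n,A_2^n)^\top\tilde z^{(k)}\bigr]$, which could in principle erase the main contribution. This is exactly what the argmin construction of $\tilde z^{(k)}$ in Eq.~\eqref{def:x_tilde} is engineered for: comparing the argmin to the average over $z\in\cS'$ and then applying Cauchy--Schwarz delivers a favorable second-moment bound on $\sum_{n\le\tau_k}\langle\varphi,\tilde z^{(k)}\rangle^2$ under $\mathbb E_k$. Transferring it to $\tilde{\mathbb E}_k$ is legitimate because the trajectories up to $\tau_k$ are driven by $x_{\text{u}}$-outcomes whose Bernoulli parameters under $\cM_k$ and $\tilde{\cM}_k$ differ only by $O(\varepsilon s)$, so a standard change-of-measure (Pinsker-type) bound suffices. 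I expect this measure-transfer step, rather than either regret decomposition, to be the technical crux.
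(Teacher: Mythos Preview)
Your overall decomposition---splitting the claim into $R_N(\cM_k)\ge\tfrac{Hs\varepsilon}{8}\mathbb E_k[\tau_k\mathbb I(\cD_k)]$ and $R_N(\tilde{\cM}_k)\ge\tfrac{Hs\varepsilon}{8}\tilde{\mathbb E}_k[\tau_k\mathbb I(\cD_k^c)]$---matches the paper exactly, and your treatment of the first bound via Eq.~\eqref{eqn:R} is the same as the paper's.

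The gap is in how you handle the cross term $\tilde{\mathbb E}_k\bigl[\mathbb I(\cD_k^c)\sum_{n<\tau_k}\langle\varphi_n,\tilde z^{(k)}\rangle\bigr]$. You propose to use the argmin definition of $\tilde z^{(k)}$ together with Cauchy--Schwarz and a Pinsker-type change of measure from $\mathbb E_k$ to $\tilde{\mathbb E}_k$. This is both the wrong tool and, as sketched, does not close. The argmin property of $\tilde z^{(k)}$ is not used at all in Claim~\ref{claim:regret_lower}; it is reserved for bounding the KL divergence in Claim~\ref{claim:KL_bound}. Worse, your change-of-measure step is lossy: the random variable you would transfer is bounded only by $N(s-1)^2$ in absolute value, so Pinsker gives an error of order $N(s-1)^2\sqrt{\mathrm{KL}}\asymp N(s-1)^3\varepsilon$, which with $\varepsilon\asymp 1/s$ is of order $N(s-1)^2$. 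This swamps the target $(s-1)\varepsilon\,\tilde{\mathbb E}_k[\tau_k\mathbb I(\cD_k^c)]$ whenever $\tilde{\mathbb E}_k[\tau_k\mathbb I(\cD_k^c)]\ll N$, which is not ruled out.

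The paper's argument is instead \emph{pathwise and combinatorial}, using only that $\tilde z^{(k)}\in\cS'$ has support disjoint from $\{1,\dots,s-1\}$. Since every $\varphi_n\in\cS$ satisfies $\|\varphi_n\|_1=s-1$, one has $\sum_{n<\tau_k}\sum_{j=1}^d|\varphi_{nj}|=(s-1)(\tau_k-1)$; disjointness of the two index sets then gives
\[
\sum_{n<\tau_k}\sum_{j\in\operatorname{supp}(\tilde z^{(k)})}|\varphi_{nj}|\;\le\;(s-1)(\tau_k-1)-\sum_{n<\tau_k}\sum_{j=1}^{s-1}|\varphi_{nj}|.
\]
On the event $\cD_k^c$ the subtracted sum is at least $(s-1)(\tau_k-1)/2$, so the left side---and hence $\sum_{n<\tau_k}\langle\varphi_n,\tilde z^{(k)}\rangle$---is at most $(s-1)(\tau_k-1)/2$ deterministically. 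This bounds the cross term by $\varepsilon(s-1)(\tau_k-1)$ on $\cD_k^c$ without any second moments or change of measure, and plugging back into your per-episode regret lower bound $(H-1)\bigl((s-1)\varepsilon-2\varepsilon\,\varphi^\top\tilde z^{(k)}\bigr)$ (after the analogous $\ell_1$-budget manipulation on the $\varphi^\top\theta$ term) yields the desired $R_N(\tilde\cM_k)\ge(H-1)\,\tilde{\mathbb E}_k\bigl[\tfrac12(s-1)\varepsilon(\tau_k-1)\mathbb I(\cD_k^c)\bigr]$.
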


We construct an additional MDP $\cM_0$ such that when the learner is at $x_0$, no matter what actions to take, the learner will always transit to the uninformative state $x_{\text{u}}$. All the other structures remain the same with $\{\cM_1,\ldots, \cM_d\}$. Let $\mathbb P_0$ be the laws of $\cD_1, \ldots,\cD_{\tau_k-1}$ induced by the interaction of $\pi$ and $\cM_0$ and let $\mathbb E_0$ be the corresponding expectation operators. Then from Pinsker's inequality (Lemma \ref{lemma:pinsker} in the Appendix), for any $k\in[d]$,
\begin{equation*}
\begin{split}
     &\Big|\mathbb E_0[\tau_k] - \mathbb E_{k}[\tau_k]\Big|\leq N\sqrt{\frac{1}{2}\KL(\mathbb P_0\| \mathbb P_k)}\,, \\
     &\Big|\tilde{\mathbb E}_{k}[\tau_k\mathbb I(\cD_k^c)]-\mathbb E_{k}[\tau_k\mathbb I(\cD_k^c)]\Big|\leq N\sqrt{\frac{1}{2}\KL(\tilde{\mathbb P}_k\| \mathbb P_k)}\,,
\end{split}
\end{equation*}
where $\KL(\mathbb P, \mathbb P')$ is the KL divergence between probability measures $\mathbb P$ and $\mathbb P'$.
Combining with Claim \ref{claim:regret_lower}, we have 
\begin{equation}\label{eqn:bound1}
    \begin{split}
       & R_N(\cM_k)+R_N(\tilde{\cM}_k)\geq \frac{Hs\varepsilon}{8}\Big(\mathbb E_0[\tau_k]-d\sqrt{\frac{1}{2}\KL(\mathbb P_0\| \mathbb P_k)}-d\sqrt{\frac{1}{2}\KL(\tilde{\mathbb P}_k\| \mathbb P_k)}\Big)\,,
    \end{split}
\end{equation}
where we consider the high-dimensional regime such that $N\leq d$.

\paragraph{Step 4: Calculating the KL divergence.} We make use of the following bound on the KL divergence 
between $\tilde{\mathbb P}_k$ and $\mathbb P_k$, $\mathbb P_0$ and $\mathbb P_k$, which
formalises the intuitive notion of information. When the KL divergence is small, the algorithm is unable
to distinguish the two environments. 
The detailed proof is deferred to Appendix \ref{sec:claim_KL}. 
\begin{claim}\label{claim:KL_bound}
The KL divergences between $\tilde{\mathbb P}_{k}$ and $\mathbb P_{k}$, $\mathbb P_0$ and $\mathbb P_k$ are upper bounded by the following when $N\leq d$:
\begin{equation}\label{eqn:KL_bound}
\begin{split}
    \text{KL}(\tilde{\mathbb P}_{k}\| \mathbb P_{k})\leq 8\varepsilon^2(s-1)^2\,,\text{KL}(\mathbb P_{0}\| \mathbb P_{k}) = 0\,.
\end{split}
\end{equation}
\end{claim}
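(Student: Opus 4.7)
For the second identity $\KL(\mathbb P_0\|\mathbb P_k)=0$, I would exploit the stopping-time construction. The MDPs $\cM_0$ and $\cM_k$ differ only in the successor distribution of $(x_0,a_k^0)$, which routes the learner deterministically to $x_{\text{u}}$ under $\cM_0$ and to $x_{\text{i}}$ under $\cM_k$; everywhere else the two MDPs are identical. By the very definition of $\tau_k$, the algorithm has not played $a_k^0$ at $x_0$ in any episode $n\le \tau_k-1$, so every transition realized during episodes $1,\ldots,\tau_k-1$ follows the same dynamics in both MDPs. A measurable bijection between sample paths then shows that the laws of $\cD_1,\ldots,\cD_{\tau_k-1}$ coincide under $\mathbb P_0$ and $\mathbb P_k$, hence the KL divergence vanishes.

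The first bound is the substantive step. The key observation is that $\cM_k$ and $\tilde\cM_k$ agree on every transition except those out of $x_{\text{u}}$, which are Bernoulli with parameter $p_n=\varphi(x_{\text{u}},A_2^n)^\top\theta$ under $\mathbb P_k$ and $p_n+\delta_n$ with $\delta_n=2\varepsilon\langle \varphi(x_{\text{u}},A_2^n),\tilde z^{(k)}\rangle$ under $\tilde{\mathbb P}_k$; the alternative transition at $x_{\text{i}}$ is irrelevant because $x_{\text{i}}$ is never reached before $\tau_k$. To avoid the awkward $\tilde{\mathbb P}_k$-expectation produced by a direct KL chain rule---which does not mesh with the $\mathbb E_k$-based definition of $\tilde z^{(k)}$---I would bound $\KL(\tilde{\mathbb P}_k\|\mathbb P_k)\le \chi^2(\tilde{\mathbb P}_k,\mathbb P_k)$. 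Telescoping with the tower property factorizes the chi-squared into $\mathbb E_{\mathbb P_k}\bigl[\prod_{n\le\tau_k-1}(1+\delta_n^2/(p_n(1-p_n)))\bigr]-1$, an expectation under $\mathbb P_k$; and $p_n$ lies in a fixed interval strictly inside $(0,1)$ for $\varepsilon$ small enough, so $p_n(1-p_n)$ is bounded below by a universal constant.

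With $\varepsilon$ tuned so that the pointwise sum $\varepsilon^2\sum_n\langle\varphi,\tilde z^{(k)}\rangle^2\le C\varepsilon^2 N(s-1)^2$ stays below a small universal constant---ensured in the regime $N\le d$ once $\varepsilon$ is chosen appropriately in the outer argument of Theorem~\ref{thm:lower_bound}---the inequalities $1+x\le e^x$ and $e^x-1\le 2x$ for $x\le\log 2$ reduce the chi-squared to $C'\varepsilon^2\,\mathbb E_{\mathbb P_k}\bigl[\sum_{n\le\tau_k}\langle\varphi(S_2^n,A_2^n),\tilde z^{(k)}\rangle^2\bigr]$. At this point the defining property of $\tilde z^{(k)}$ as the minimizer over $\cS'$ together with averaging gives the advertised scaling: the sign-flip and permutation symmetries of $\cS'$ yield $|\cS'|^{-1}\sum_{z\in\cS'}z_iz_j=\tfrac{s-1}{d-s}\,\delta_{ij}\,\mathbb I\{i,j\in\{s,\ldots,d-1\}\}$, hence $|\cS'|^{-1}\sum_{z\in\cS'}\langle\varphi,z\rangle^2\le (s-1)^2/(d-s)$ for every $\varphi\in\cS$ (using $\|\varphi\|_1=s-1$ with entries in $\{-1,0,1\}$). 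Combining with $\mathbb E_{\mathbb P_k}[\tau_k]\le N\le d$ and $s\le d/2$ gives $\mathbb E_{\mathbb P_k}[\sum\langle\varphi,\tilde z^{(k)}\rangle^2]\le 2(s-1)^2$, and chasing constants delivers $\KL(\tilde{\mathbb P}_k\|\mathbb P_k)\le 8\varepsilon^2(s-1)^2$.

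The chief obstacle throughout is the expectation mismatch between the chain rule, which naturally integrates against $\tilde{\mathbb P}_k$, and the argmin that defines $\tilde z^{(k)}$ under $\mathbb P_k$. Routing through chi-squared resolves this, but at the price of needing a pointwise upper bound on $\varepsilon^2\sum_n\langle\varphi,\tilde z^{(k)}\rangle^2$ to legitimise the linearization of the exponential. This is where the high-dimensional regime $N\le d$ enters twice: once to cap $\mathbb E_k[\tau_k]\le d$ in the averaging step, and again to ensure that the pointwise perturbation $\varepsilon^2 d(s-1)^2$ is controllable, so the $e^x-1\le 2x$ inequality remains valid.
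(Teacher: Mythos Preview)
Your route through the $\chi^2$-divergence is genuinely different from what the paper does, and your observation about the expectation mismatch is sharp: the paper applies the KL chain rule directly, arriving at an $\tilde{\mathbb E}_k$-expectation of $8\varepsilon^2\sum_{n\le\tau_k-1}\langle\varphi_n,\tilde z^{(k)}\rangle^2$ via the Bernoulli bound of Lemma~\ref{lemma:supp1}, and then invokes the pointwise averaging identity over $\cS'$ (the cross terms vanish by sign symmetry, and the diagonal contributes $(s-1)(\tau_k-1)\binom{d-s-1}{s-2}$) to pass from the minimum to the average and conclude. Your $\chi^2$ detour is designed precisely to land on an $\mathbb E_k$-expectation that matches the argmin defining $\tilde z^{(k)}$.

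However, the linearization step in your argument does not close. After $\prod_n(1+x_n)\le e^{\sum_n x_n}$ you need $e^x-1\le 2x$, which requires the exponent $\sum_{n\le\tau_k-1}\delta_n^2/(p_n(1-p_n))$ to be bounded by a universal constant \emph{almost surely}. But $|\langle\varphi_n,\tilde z^{(k)}\rangle|$ can be as large as $s-1$ for each $n$, so the best deterministic bound on the exponent is of order $\varepsilon^2(s-1)^2\,\tau_k$, which is $\Theta(N)$ once $\varepsilon=1/(8s)$ as chosen in Step~5 of the theorem. The hypothesis $N\le d$ does not make this small; it only caps $N$ by $d$, which is itself unbounded. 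Shrinking $\varepsilon$ to $O(1/(s\sqrt{N}))$ would rescue the linearization but would collapse the final regret lower bound to $O(Hd/\sqrt{N})$ rather than $\Omega(Hd)$. In short, the $\chi^2$ route trades one difficulty for another: it delivers the right expectation but at the price of a pointwise smallness condition that is incompatible with the scale of $\varepsilon$ the lower bound needs.

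The paper sidesteps this because its averaging inequality
\[
\frac{1}{|\cS'|}\sum_{z\in\cS'}\sum_{n\le\tau_k-1}\langle\varphi_n,z\rangle^2\;\le\;\frac{(s-1)^2(\tau_k-1)}{d}
\]
holds \emph{pathwise}, so it bounds the average under any law, including $\tilde{\mathbb P}_k$; no linearization is required, and the factor $N/d\le 1$ is what produces the final $8\varepsilon^2(s-1)^2$. Your concern that $\tilde z^{(k)}$ is the minimizer under $\mathbb E_k$ rather than $\tilde{\mathbb E}_k$ is legitimate, but it is a separate (and smaller) wrinkle in the paper's write-up; it does not validate the $\chi^2$ linearization, which fails for the parameter regime actually used.
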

Combining with Eq.~\eqref{eqn:bound1} and summing over the set of MDPs $\{\cM_1,\ldots, \cM_d\}$,
\begin{equation*}
\begin{split}
  &\sum_{k=1}^d\Big(R_N(\cM_k) + R_N(\tilde{\cM}_k)\Big) \geq \frac{Hs\varepsilon}{8}\Big( \sum_{k=1}^d\mathbb E_0[\tau_k]-d^2\sqrt{8\varepsilon^2s^2}\Big)\,.
  \end{split}
\end{equation*}
 Picking $\varepsilon = 1/(8s)$, we have 
\begin{equation*}
\begin{split}
     &\sum_{k=1}^d\Big(R_N(\cM_k) + R_N(\tilde{\cM}_k)\Big)\geq\frac{H}{32}\Big(\sum_{k=1}^d \mathbb E_0[\tau_k]-\frac{d^2}{4}\Big)\,.
     \end{split}
\end{equation*}

\paragraph{Step 5: Summary.} 
From the definition of the stopping time, one can see $ \sum_{k=1}^d\mathbb E_0(\tau_k)\geq \sum_{k=1}^d k \geq d^2/2.$ Therefore, 
\begin{equation*}
\begin{split}
     &\sum_{k=1}^d\Big(R_N(\cM_k) + R_N(\tilde{\cM}_k)\Big)\geq\frac{1}{128}Hd^2\,.
\end{split}
\end{equation*}
Among two sets of MDPs $\{\cM_k\}_{k=1}^d$ and $\{\tilde{\cM}_k\}_{k=1}^d$, for any sequence of policies $\{\pi_1,\ldots, \pi_N\}$, there must exist a MDP $\cM_k$ such that 
\begin{equation*}
   R_N(\cM_k) \geq \frac{1}{128}Hd\,.
\end{equation*}
This finishes the proof.

\end{proof}

\section{Online Lasso Fitted-Q-iteration}\label{sec:upper_bound}
In this section we prove that if the learner has oracle access to an exploratory policy, the online Lasso fitted-Q-iteration (Lasso-FQI) algorithm can have a dimension-free $\tilde{O}(N^{2/3})$ regret upper bound. We first introduce the online Lasso-FQI. Suppose the learner has the oracle access to an exploratory policy $\pi_e$ (defined in Definition \ref{def:good_exploratory}). The algorithm uses the explore-then-commit template and includes the following three phases:
\begin{itemize}
    \item \textbf{Exploration phase.} The exploration phase includes $N_1$ episodes where $N_1$ will be chosen later based on regret bound and can be factorized as $N_1 = RH$, where $R>1$ is an integer. At the beginning of each episode, the agent receives an initial state drawn from $\xi_0$ and executes the rest steps following the exploratory policy $\pi_e$. Let the dataset collected in the exploration stage as $\cD$.
    \item \textbf{Learning phase.} Split $\cD$ into $H$ folds: $\{\cD_1,\ldots, \cD_H\}$ and each fold consists of $R$ episodes. Based on the exploratory dataset $\cD$, the agent executes an extension of fitted-Q-iteration \citep{ernst2005tree, antos2008fitted} combining with Lasso \citep{tibshirani1996regression} for feature selection. To define the algorithm, it is useful to introduce $Q_w(x,a) = \phi(x,a)^{\top}w$. For $a<b$, we also define the
 operator $\Pi_{[a,b]}: \RR \to [a,b]$ that projects its input to $[a,b]$, i.e., $\Pi_{[a,b]}(x) = \max(\min(x,b),a)$. Initialize $\hat{w}_{H+1} = 0$. At each step $h\in[H]$, we fit $\hat{w}_{h}$ through Lasso: 
\begin{equation}\label{eqn:lasso}
\begin{split}
    \hat{w}_{h}= &\argmin_{w} \ 
    \frac{1}{|\cD_h|}\sum_{(x_i,a_i,x_i')\in\cD_h}(y_i-\phi(x_i, a_i)^{\top}w)^2+\lambda_1 \|w\|_1 \,,
\end{split}
\end{equation}
where $ y_i = \Pi_{[0, H]} \max_{a\in\cA} Q_{\hat{w}_{h+1}}(x'_i, a)$ and $\lambda_1$ is a regularization parameter.
\item \textbf{Exploitation phase.} For the rest $N-N_1$ episodes, the agent commits to the greedy policy with respect to the estimated Q-value $\{Q_{\hat{w}_{h}}\}_{h=1}^H$. 
\end{itemize}
The full algorithm of online Lasso-FQI is summarized in Algorithm \ref{alg:estc}. 
\begin{remark}
A key observation of Algorithm \ref{alg:estc} is that the expected covariance matrix of data collected in the exploration phase could be well-conditioned due to the use of exploratory policy, e.g.,
\begin{equation*}
    \sigma_{\min}\Big(\mathbb E^{\pi_e}\Big[\frac{1}{N_1H}\sum_{n=1}^{N_1}\sum_{h=1}^H\phi(x_h^n, a_h^n)\phi(x_h^n, a_h^n)^{\top}\Big]\Big)>0\,.
\end{equation*}
This is the key condition to ensure the success of fast sparse feature selection in the learning and exploitation phases and eliminate the polynomial dependency of $d$ in the cumulative regret.
\end{remark}
{\small
\begin{algorithm}[htb!]
	\caption{Online Lasso-FQI}
	\begin{algorithmic}[1]\label{alg:estc}
		\STATE
		\textbf{Input:} An episodic MDP $\cM=(\cX, \cA, P, r, H)$, an exploratory policy $\pi_e$, exploration length $N_1$, regularization parameter $\lambda_1$;
		
		\red{\# \emph{exploration phase}} 
		\STATE \textbf{Initialize.} $\cD = \emptyset$.
		\FOR{$n= 1, \cdots, N_1$}
		\STATE Receive an initial state $x_1^n$.
		\FOR{$h = 1, \ldots, H$}
\STATE Take the action $a_h^{n} = \pi_e(\cdot|x_h^{n})$ and observe $x^{n}_{h+1}$. 
\STATE Let $\cD = \cD \cup \{x^{n}_{h}, a_h^{n}, x^{n}_{h+1}\}$.
\ENDFOR
\ENDFOR

\red{\# \emph{learning phase: Lasso fitted-Q-iteration}} 
\STATE Partition the
dataset $\cD$ into $H$ folds such that each fold $\cD_h$ has $R$ different episodes.
\STATE Initialize $Q_{\hat{w}_{H+1}}(x, a) = 0$.
\FOR{$h=H,\ldots, 1$}
		\STATE Calculate regression targets for each $(x_i, a_i, x_i')\in\cD_h$:
		$$
		  y_i = \Pi_{[0, H]} \max_{a\in\cA} Q_{\hat{w}_{h+1}}(x'_i, a) \,.
		$$
	\STATE Build training set $\{(x_i, a_i), y_i\}_{i\in\cD_h}$ and fit $\hat{w}_{h}$ through sparse linear regression in Eq.~\eqref{eqn:lasso}.
		\ENDFOR
		
	\red{\# \emph{exploitation phase}}
	\FOR{$n=N_1+1$ to $N$}
	\STATE Receive an initial state $x_1^n$.
	\FOR{$h = 1, \ldots, H$}
	\STATE Take greedy action $a_h^{n} = \argmax_{a}Q_{\hat{w}_{h}}(x_h^{n}, a)$ and transit to $x^{n}_{h+1}$. 
	\ENDFOR
	\ENDFOR
	\end{algorithmic}
\end{algorithm}
}

Next we derive the regret guarantee for the online Lasso-FQI under the sparse linear MDP model. The proof is deferred to Appendix \ref{sec:lasso_FQI}. We need a notion of restricted eigenvalue that is common in high-dimensional statistics \citep{bickel2009simultaneous, buhlmann2011statistics}. 
\begin{definition}[{\bf Restricted eigenvalue}]\label{def:RE}
Given a positive semi-definite matrix $Z\in\mathbb R^{d\times d}$ and integer $s\geq 1$, define the restricted minimum eigenvalue of $Z$ as $C_{\min}(Z, s):=$
\begin{equation*}
\min_{\cS\subset [d], |\cS|\leq s}\min_{\bbeta\in\mathbb R^d}\left\{\frac{\langle \bbeta, Z\bbeta \rangle}{\|\bbeta_{\cS}\|_2^2}:  \|\bbeta_{\cS^c}\|_{1}\leq 3\|\bbeta_{\cS}\|_{1}\right\} \,.
\end{equation*}
\end{definition}

\begin{theorem}[\bf Regret bound for online Lasso-FQI]\label{thm:regret_lasso_FQI}
Suppose the episodic MDP is $(s, \phi)$-sparse as defined in Definition \ref{def:sparse_MDP} and $\|\phi(x,a)\|_{\infty}\leq 1$ for any $(x,a)\in\cX\times \cA$. Assume the learner has oracle access to an exploratory policy $\pi_e$ defined in Definition \ref{def:good_exploratory} and $C_{\min}(\Sigma^{\pi_e}, s)$ is a strictly positive universal constant 
independent of $N$ and $d$.  Choose the regularization parameter $\lambda_1=H\sqrt{\log (2d)/N}$ and the number of episodes in the exploration phase $N_1$ as
\begin{equation*}
    N_1 = \left(\frac{2048s^2H^4N^2}{C_{\min}(\Sigma^{\pi_e}, s)^2}\log(2dH/\delta)\right)^{\tfrac{1}{3}}\,.
\end{equation*}
With probability $1-\delta$, the cumulative regret of online Lasso-FQI satisfies:
\begin{equation}\label{eqn:regret_FQI}
    R_N \leq 2\left(\frac{2048\log(2dH/\delta)}{C_{\min}(\Sigma^{\pi_e}, s)^2}\right)^{\tfrac{1}{3}}H^{\tfrac{4}{3}}s^{\tfrac{2}{3}}N^{\tfrac{2}{3}}\,.
\end{equation}
\end{theorem}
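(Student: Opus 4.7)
The plan is to follow the explore-then-commit recipe: bound the exploration regret trivially by $N_1 H$, control the exploitation regret via an $\ell_1$-error analysis of Lasso at each layer of fitted Q-iteration, and balance the two by tuning $N_1$. The four key ingredients are a performance-difference lemma reducing everything to sup-norm Bellman residuals, sparsity of the Bellman target under Definition~\ref{def:sparse_MDP}, a Lasso oracle inequality with data-splitting, and concentration of the empirical restricted eigenvalue.

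Concretely, for the greedy policy $\hat\pi$ induced by $\{Q_{\hat w_h}\}_h$, a standard performance-difference argument yields
\[
V^*_1(x) - V^{\hat\pi}_1(x) \le \sum_{h=1}^H \E^{\pi^*}\!\left[|\phi(x_h,a_h)^\top \hat w_h - (\cT \hat V_{h+1})(x_h,a_h)|\right] + \sum_{h=1}^H \E^{\hat\pi}\!\left[|\phi(x_h,a_h)^\top \hat w_h - (\cT \hat V_{h+1})(x_h,a_h)|\right],
\]
where $\hat V_{h+1}(\cdot) = \Pi_{[0,H]}\max_a Q_{\hat w_{h+1}}(\cdot,a)$. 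Writing $\phi(x,a)^\top w^*_h := (\cT \hat V_{h+1})(x,a)$ and subtracting the known reward from the regression targets, Definition~\ref{def:sparse_MDP} implies that $w^*_h$ is supported on $\cK$ with $w^*_{h,k} = \int \psi_k(x')\hat V_{h+1}(x')\,dx'$, hence $s$-sparse and $O(H)$-bounded in $\ell_\infty$. Since $\|\phi\|_\infty \le 1$, the Bellman residual is dominated by $\|\hat w_h - w^*_h\|_1$ in sup-norm.

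The core of the argument is then a Lasso oracle inequality applied fold-by-fold. Data-splitting across the $H$ folds makes $\hat V_{h+1}$ measurable with respect to $\cD_{h+1}$ and hence independent of $\cD_h$, so conditionally on $\cD_{h+1}$ the noise $\eta_i = \hat V_{h+1}(x'_i) - \phi(x_i,a_i)^\top w^*_h$ is zero-mean by the Markov property and uniformly bounded by $H$. Choosing $\lambda_1$ on the order of $H\sqrt{\log(dH/\delta)/|\cD_h|}$, Hoeffding's inequality applied at the \emph{episode} level---grouping each iid episode as a single block to sidestep within-episode Markov dependence---gives $\|\tfrac{1}{|\cD_h|}\sum_i \phi(x_i,a_i)\eta_i\|_\infty \le \lambda_1/2$ uniformly over $h$ with high probability. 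The classical Bickel--Ritov--Tsybakov bound then produces $\|\hat w_h - w^*_h\|_1 \lesssim s\lambda_1/C_{\min}(\hat\Sigma_h,s)$, where $\hat\Sigma_h$ is the empirical uncentered covariance from fold $h$.

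The main technical obstacle is transferring the population assumption $C_{\min}(\Sigma^{\pi_e},s) > 0$ to the per-fold empirical quantity $C_{\min}(\hat\Sigma_h,s)$. I would arrange each fold to include all transitions from its allocated iid episodes, so that $\E[\hat\Sigma_h] = \Sigma^{\pi_e}$, and treat each whole episode as a single bounded matrix summand when applying matrix Bernstein to the iid blocks. This yields $\|\hat\Sigma_h - \Sigma^{\pi_e}\|_{\mathrm{op}} = \tilde O(1/\sqrt{|\cD_h|})$, and a standard restricted-eigenvalue perturbation lemma on $s$-sparse cones gives $C_{\min}(\hat\Sigma_h,s)\ge C_{\min}(\Sigma^{\pi_e},s)/2$ once the fold size is sufficiently large. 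Summing Bellman residuals over $H$ layers and $N-N_1$ commit episodes and adding the trivial $N_1 H$ exploration term produces a bound of the form $N_1 H + (N-N_1)\cdot \text{poly}(H,s)\sqrt{\log(dH/\delta)/N_1}/C_{\min}(\Sigma^{\pi_e},s)$, and optimizing $N_1$ at the value prescribed by the theorem yields the claimed $\tilde O(H^{4/3}s^{2/3}N^{2/3})$ regret, with the dimension dependence carried only by $\sqrt{\log d}$ through $\lambda_1$.
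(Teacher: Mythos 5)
Most of your plan tracks the paper's argument: explore-then-commit with exploration regret $N_1H$, per-fold Lasso with targets built from the next fold so the noise is conditionally zero-mean and bounded by $H$, an $\ell_1$ oracle inequality, and balancing $N_1$; your performance-difference decomposition in place of the paper's sup-norm value-iteration recursion (plus the $\|V_1^{\hat\pi}-V_1^*\|_\infty\le 2H\|\Pi_{[0,H]}V_{\hat w_1}-V_1^*\|_\infty$ step) is a legitimate alternative and, if anything, is gentler in its $H$ dependence.

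The genuine gap is your treatment of the restricted-eigenvalue transfer. You propose matrix Bernstein on episode blocks to get $\|\hat\Sigma_h-\Sigma^{\pi_e}\|_{\mathrm{op}}=\tilde O(1/\sqrt{|\cD_h|})$ and then a cone perturbation argument. Under the stated assumption only $\|\phi(x,a)\|_\infty\le 1$, each episode block has operator norm as large as $d$, so matrix Bernstein gives $\tilde O(\sqrt{d/|\cD_h|}+d/|\cD_h|)$, not $\tilde O(1/\sqrt{|\cD_h|})$; worse, in the high-dimensional regime this paper targets ($d$ larger than the number of collected transitions) the empirical matrix $\hat\Sigma_h$ is rank-deficient, so there is a unit vector $v$ with $\hat\Sigma_h v=0$ and hence $\|\hat\Sigma_h-\Sigma^{\pi_e}\|_{\mathrm{op}}\ge v^\top\Sigma^{\pi_e}v\ge\sigma_{\min}(\Sigma^{\pi_e})$, so operator-norm closeness is impossible no matter how you apply concentration. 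Consequently your route would force fold sizes polynomial in $d$ and destroy the dimension-free claim. The step has to be done in the entrywise max norm, as in the paper: Hoeffding on each entry with a union bound over $d^2$ entries gives $\|\hat\Sigma_h-\Sigma^{\pi_e}\|_\infty\le\sqrt{\log(3d^2/\delta)/R}$, and the max-norm restricted-eigenvalue perturbation result (Corollary 6.8 of B\"uhlmann--van de Geer) only needs this to be at most $C_{\min}(\Sigma^{\pi_e},s)/(32s)$, i.e.\ $R\gtrsim s^2\log(d/\delta)/C_{\min}(\Sigma^{\pi_e},s)^2$, which is dimension-free; note also that a valid perturbation argument on the cone pays a factor of order $s$ (since $\|\beta\|_1^2\le 16s\|\beta_S\|_2^2$ there), which your sketch leaves implicit. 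A secondary, minor point: your episode-blocked Hoeffding for the noise term costs an extra $\sqrt H$ over the paper's Azuma--Hoeffding applied directly to the martingale difference sequence $\{\varepsilon_i\phi_j(x_i,a_i)\}$; this only affects the $H$ exponent, not the $s$, $N$, or $\log d$ dependence.
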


\begin{remark}
The condition that requires $C_{\min}(\Sigma^{\pi_e}, s)$ being dimension-free is weaker than  requiring $\sigma_{\min}(\Sigma^{\pi_e})$ being dimension-free since $C_{\min}(\Sigma^{\pi_e}, s)\geq\sigma_{\min}(\Sigma^{\pi_e})>0$. 
\end{remark}

With oracle access of an exploratory policy, we obtain a dimension-free sub-linear regret bound. Without  oracle access of such an exploratory policy, Theorem \ref{thm:lower_bound} implies a linear regret lower bound. On the other hand, without considering the sparsity, solving the MDP will suffer linear regret in the high-dimensional regime due to the well-known $\Omega( d\sqrt{N})$ lower bound. In summary, we emphasize that in high-dimensional regime, exploiting the sparsity to reduce the regret needs an exploratory policy but finding the exploratory policy is as hard as solving the MDP itself - an irresolvable ``chicken and egg'' problem.

\section{Comparsion with contextual bandits}
In this section we investigate the difference between online RL and linear contextual bandits. When the planning horizon $H=1$, the episodic MDP becomes to a contextual bandit. Specifically, consider a sparse linear contextual bandit. At $n$th episode, the environment generates a context $x_n$ i.i.d from a distribution $\xi_0$. The learner chooses an action $a_n\in\cA$ and receives a reward:
\begin{equation*}
    Y_n =\phi(x_n, a_n)^{\top} \theta + \eta_n\,,
\end{equation*}
where $(\eta_n)_{n=1}^N$ is a sequence of independent standard Gaussian random variables and $\theta\in\mathbb R^d$ is a $s$-sparse unknown parameter vector.

We define an analogous exploratory policy as in Definition \ref{def:good_exploratory}: for an exploratory policy $\pi_e$ in a linear contextual bandit, it will satisfy 
\begin{equation*}
    \sigma_{\min}(\Sigma^{\pi_e}) =\sigma_{\min}\Big(\mathbb E^{\pi_e}\Big[\phi(x_n, a_n)\phi(x_n, a_n)^{\top}\Big]\Big)>0\,,
\end{equation*}
where $x_n\sim \xi_0$ and $a_n\sim \pi_e(\cdot|x_n)$. In episodic MDPs, since the MDP transition kernel is unknown, we can not find the exploratory policy without solving the MDP. However, in linear contextual bandits, as long as there exists an exploratory policy and the context distribution is known, we can obtain the exploratory policy by solving the following optimization problem:
\begin{equation*}
   \max_{\pi}\sigma_{\min}\Big(\mathbb E_{x\sim\xi_0, a\sim \pi(\cdot|x)}\Big[\phi(x,a)\phi(x,a)^{\top}\Big]\Big)\,.
\end{equation*}
Thus, there is no additional cost of the regret to obtain the exploratory policy. Note that assuming known context distribution is much weaker than assuming known MDP transition kernel  since we can learn the context distribution very quickly online. Following the rest step of online Lasso-FQI in Algorithm \ref{alg:estc}, we can replicate the $\tilde{O}(s^{2/3}N^{2/3})$ regret upper bound without oracle access of the exploratory policy.

\section{Discussion}
In this paper, we provide the first investigation of online sparse RL in the high-dimensional regime. In general, exploiting the sparsity to minimize the regret is hard without further assumptions. This also highlights some fundamental differences of sparse learning between online RL and supervised learning or contextual bandits.

\paragraph{Acknowledgement}
We greatly thank Akshay Krishnamurthy for pointing out the FLAMBE \citep{AgKaKrSu20} work and the role of action space size, and Yasin Abbasi-Yadkori for proofreading.
\newpage
\appendix

\section{Proof of Theorem \ref{thm:regret_lasso_FQI}}\label{sec:lasso_FQI}
\begin{proof}
In this section, we prove the regret bound of online Lasso fitted-Q-iteration. Recall that $\pi_e$ is an exploratory policy that satisfies Definition \ref{def:good_exploratory}, e.g.,
\begin{equation*}
\sigma_{\min}\left(\mathbb E^{\pi_e}\left[\frac{1}{H}\sum_{h=1}^{H}\phi(x_h, a_h)\phi(x_h, a_h)^{\top}\right]\right)>0 \,,
\end{equation*}
where $x_1\sim \xi_0, a_h\sim \pi(\cdot|x_h), x_{h+1}\sim P(\cdot|x_h, a_h)$ and $\mathbb E^{\pi_e}$ denotes expectation over the sample path generated under policy $\pi_e$. Recall that $N_1$ is the number of episodes in exploration phase that will be specified later. Denote $\pi_{N_1}$ as the greedy policy with respect to the estimated Q-value calculated from the Lasso fitted-Q-iteration in Algorithm \ref{alg:estc}. According to the design of Algorithm \ref{alg:estc}, we keep using $\pi_{N_1}$ for the remaining $N-N_1$ episodes after exploration phase.
From the definition of the cumulative regret in Eq.~\eqref{def:regret}, we decompose $R_N$ according to the exploration phase and exploitation phase:
\begin{equation*}
    \begin{split}
        R_N = \sum_{n=1}^N\Big(V_1^*(x_1^n) - V_1^{\pi_n}(x_1^n)\Big) = \underbrace{\sum_{n=1}^{N_1}\Big(V_1^*(x_1^n) - V_1^{\pi_e}(x_1^n)\Big)}_{I_1:\text{ regret during exploring}} + \underbrace{\sum_{n=N_1+1}^N\Big(V_1^*(x_1^n) - V_1^{\pi_{N_1}}(x_1^n)\Big)}_{I_2:\text{regret during exploiting}}\,.
    \end{split}
\end{equation*}
 Since we assume $r\in[0, 1]$, from the definition of value functions, it is easy to see $0\leq V_1^*(x), V_1^{\pi_e}(x)\leq H$ for any $x\in\cX$. Thus, we can upper bound $I_1$ by 
\begin{equation}\label{eqn:regret_bound_I1}
    I_1\leq N_1H.
\end{equation}

To bound $I_2$, we will bound $\|V_1^* - V_1^{\pi_{N_1}}\|_{\infty}$ first using the following lemma. The detailed proof is deferred to Lemma \ref{sec:proof_estimation_error}. Recall that $C_{\min}(\Sigma^{\pi_e},s)$ is the restricted eigenvalue in Definition \ref{def:RE} and we split the exploratory dataset into $H$ folds with $R$ episodes per fold.
\begin{lemma}\label{lemma:estimation_error}
Suppose the number of episodes in the exploration phase satisfies
\begin{equation*}
    N_1\geq \frac{C_1s^2H\log(3d^2/\delta)}{C_{\min}(\Sigma^{\pi_e}, s)},
\end{equation*}
for some sufficiently large constant $C_1$ and   $\lambda_1=H\sqrt{\log (2d/\delta)/(RH)}$. Then we have with probability at least $1-\delta$,
\begin{eqnarray*}
    \big\|V_1^{\hat{\pi}_{N_1}} - V_1^*\big\|_{\infty}\leq  \frac{32\sqrt{2}sH^3}{C_{\min}(\Sigma^{\pi_e}, s)}\sqrt{\frac{\log(2dH/\delta)}{N_1}}\,.
\end{eqnarray*}
\end{lemma}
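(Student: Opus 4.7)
The plan is a three-stage reduction: global value error $\to$ per-step $Q$-value error $\to$ Lasso estimation error. First, I would invoke the standard performance-difference lemma for a greedy policy with respect to $\{Q_{\hat w_h}\}$ to get $\|V_1^* - V_1^{\hat\pi_{N_1}}\|_\infty \le 2\sum_{h=1}^{H}\|Q_h^* - Q_{\hat w_h}\|_\infty$, and then unroll the FQI recursion
\[
\|Q_h^* - Q_{\hat w_h}\|_\infty \le \epsilon_h + \|V_{h+1}^* - \textstyle\max_{a}Q_{\hat w_{h+1}}\|_\infty \le \sum_{h'=h}^{H}\epsilon_{h'},
\]
where $\epsilon_{h'} := \|Q_{\hat w_{h'}} - \tilde\cT[\Pi_{[0,H]}\max_a Q_{\hat w_{h'+1}}]\|_\infty$ is the one-step Lasso error. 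This yields $\|V_1^* - V_1^{\hat\pi_{N_1}}\|_\infty \le 2H^2 \max_h \epsilon_h$, so a per-step rate $\epsilon_h = O(sH\sqrt{\log(dH/\delta)/N_1}/C_{\min}(\Sigma^{\pi_e},s))$ will deliver exactly the claimed $sH^3$ scaling.

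For the per-step error, the sparse linear MDP structure guarantees that for any bounded $f:\cX \to [0,H]$ the backup $\tilde\cT[f]$ is an $s$-sparse linear function of $\phi$, so there is a population target $w_h^{\star}$ with $\|w_h^{\star}\|_0 \le s$ and $\mathbb E[y_i\mid x_i,a_i] = \phi(x_i,a_i)^\top w_h^{\star}$ for the regression targets defined in Eq.~\eqref{eqn:lasso}. Data-splitting across folds is essential: because $\hat w_{h+1}$ is learned from folds disjoint from $\cD_h$, conditional on $\hat w_{h+1}$ the residuals $\eta_i = y_i - \phi_i^\top w_h^{\star}$ are mean-zero, bounded by $H$, and independent across the $R$ episodes in $\cD_h$. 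A standard Lasso oracle inequality then gives $\|\hat w_h - w_h^{\star}\|_1 \le C s\lambda_1 / C_{\min}(\hat\Sigma_h,s)$ on the joint event where (a) $\lambda_1 \ge 2\|(1/|\cD_h|)\sum_i\phi_i\eta_i\|_\infty$ and (b) the empirical covariance $\hat\Sigma_h$ satisfies $C_{\min}(\hat\Sigma_h,s) \ge C_{\min}(\Sigma^{\pi_e},s)/2$. Since $\|\phi\|_\infty\le 1$, one has $\epsilon_h \le \|\hat w_h - w_h^{\star}\|_1$, reaching the target rate.

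Event (a) is a routine Hoeffding-plus-union bound over $d$ coordinates using $|\eta_i|\le H$, which matches the prescribed $\lambda_1 = H\sqrt{\log(2d/\delta)/(RH)}$ up to constants. Event (b) is the main obstacle: within-episode samples are Markov-dependent, ruling out direct i.i.d.\ matrix concentration indexed by sample. I would circumvent this by rewriting $\hat\Sigma_h$ as an average of the $R$ independent per-episode matrices $(1/H)\sum_{h'=1}^{H}\phi(x_{h'}^n,a_{h'}^n)\phi(x_{h'}^n,a_{h'}^n)^\top$, each of whose entries lies in $[-1,1]$. An entrywise Hoeffding bound plus a union bound over the $d^2$ entries yields $\|\hat\Sigma_h - \Sigma^{\pi_e}\|_{\max}\lesssim \sqrt{\log(d^2/\delta)/R}$. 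A standard perturbation lemma for restricted eigenvalues (in the style of Loh--Wainwright or Rudelson--Zhou) gives $C_{\min}(\hat\Sigma_h,s) \ge C_{\min}(\Sigma^{\pi_e},s)/2$ whenever $\|\hat\Sigma_h - \Sigma^{\pi_e}\|_{\max}\le C_{\min}(\Sigma^{\pi_e},s)/(Cs)$, which (with $R=N_1/H$) is exactly what forces the assumed lower bound on $N_1$.

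Finally I would union-bound over the $H$ folds, replacing $\delta$ by $\delta/H$, and chain the three stages above to obtain the stated $32\sqrt2$ constant. The main technical hurdle is the restricted-eigenvalue concentration under within-episode Markov dependence, and the episode-level decomposition is the cleanest way to reduce it to a sum of i.i.d.\ bounded matrices; a secondary subtlety is that $w_h^{\star}$ is itself random through its dependence on $\hat w_{h+1}$, which is precisely why the fold-splitting in Algorithm~\ref{alg:estc} is needed to preserve independence of the Lasso residuals from the design on $\cD_h$.
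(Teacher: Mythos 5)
Your proposal is correct and follows essentially the same route as the paper: the same FQI-as-approximate-value-iteration decomposition with an $s$-sparse population target $\bar w_h$, the same Lasso oracle inequality conditioned on a noise event and a restricted-eigenvalue event, the same episode-level i.i.d.\ decomposition plus entrywise concentration and an RE-perturbation lemma to handle within-episode Markov dependence, and the same union bound over the $H$ folds. The only cosmetic differences are the final conversion from $Q$-error to policy value error (you use a performance-difference bound where the paper cites Bertsekas' Proposition 2.14, both costing a factor of $O(H)$ on top of the $O(H)$ from unrolling the recursion), and that your noise event should be controlled by Azuma--Hoeffding applied to the within-episode martingale difference sequence $\{\varepsilon_i\phi_j(x_i,a_i)\}$ rather than plain Hoeffding over independent terms, which is exactly how the paper handles it.
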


According to Lemma \ref{lemma:estimation_error}, we have 
\begin{equation}\label{eqn:regret_bound_I2}
\begin{split}
    I_2\leq N 
    \big\|V_1^{\hat{\pi}_{N_1}} - V_1^*\big\|_{\infty}\leq N \frac{32\sqrt{2}sH^3}{C_{\min}(\Sigma^{\pi_e}, s)}\sqrt{\frac{\log(2dH/\delta)}{N_1}}\,.
    \end{split}
\end{equation}
Putting the regret bound during exploring (Eq.~\eqref{eqn:regret_bound_I1}) and the regret bound during exploiting (Eq.~\eqref{eqn:regret_bound_I2}), we have 
\begin{equation*}
    R_N \leq N_1H+N \frac{32\sqrt{2}sH^3}{C_{\min}(\Sigma^{\pi_e}, s)}\sqrt{\frac{\log(2dH/\delta)}{N_1}}.
\end{equation*}
We optimize $N_1$ by letting 
\begin{equation}\label{eqn:optimal_N1}
    N_1H=N \frac{32\sqrt{2}sH^3}{C_{\min}(\Sigma^{\pi_e}, s)}\sqrt{\frac{\log(2dH/\delta)}{N_1}} 	\Rightarrow N_1 = \left(\frac{2048s^2H^4N^2}{C_{\min}(\Sigma^{\pi_e}, s)^2}\log(2dH/\delta)\right)^{1/3}\,.
\end{equation}
With this choice of $N_1$, we have with probability at least $1-\delta$
\begin{equation*}
    R_N \leq 2H\left(\frac{2048s^2H^4N^2}{C_{\min}(\Sigma^{\pi_e}, s)^2}\log(2dH/\delta)\right)^{1/3}\,.
\end{equation*}

\end{proof}
\begin{remark}
The optimal choice of $N_1$ in Eq.~\eqref{eqn:optimal_N1} requires the knowledge of $s$ and $C_{\min}(\Sigma, s)$ that is typically not available in practice. Thus, we can choose a relatively conservative $N_1$ as 
\begin{equation*}
    N_1 = \left(512H^4N^2\log(2dH/\delta)\right)^{1/3}\,,
\end{equation*}
such that 
\begin{equation*}
     R_N \leq  4\frac{s}{C_{\min}(\Sigma^{\pi_e}, s)}H\left(512s^2H^4N^2\log(2dH/\delta)\right)^{1/3}\,.
\end{equation*}
\end{remark}
\section{Additional proofs}
\subsection{Proof of Claim \ref{claim:regret_lower}}\label{sec:claim_regret_lower}
\begin{proof}
We prove the first part. Recall that 
   \begin{equation*}
        \cD_k = \left\{\sum_{n=1}^{\tau_k-1} \sum_{j=1}^{s-1}\varphi_j(x_{\text{u}}, A_2^n)\leq \frac{(\tau_k-1)(s-1)}{2}\right\} \,.
\end{equation*}
 To simplify the notation, we write $\varphi_{n}$ short for $\varphi(x_{\text{u}}, A_2^n)$ and $\varphi_{nj}$ as its $j$th coordinate.
From Eq.~\eqref{eqn:R}, we have 
\begin{equation*}
    \begin{split}
         R_N(\cM_k) &\geq  (H-1) \mathbb E_k\Big[\Big((\tau_k-1)(s-1)\varepsilon - \sum_{n=1}^{\tau_k}\sum_{j=1}^{s-1} \varphi_{nj}\varepsilon\Big)\mathbb I(\cD_k)\Big]\\
         &\geq \frac{Hs\varepsilon}{8}\mathbb E_k\Big[\frac{(\tau_k-1)(s-1)\varepsilon}{2}\mathbb I(\cD_k)\Big].
    \end{split}
\end{equation*}

Second, we derive a regret lower bound of alternative MDP $\tilde{\cM}_k$. Define $\tilde{a}^* = \argmax_{a_j^{\text{u}}\in\cA_2} \varphi(x_{\text{u}}, a_j^{\text{u}})^{\top}\tilde{\theta}^{(k)}$ as the optimal action when the learner is at state $x_{\text{u}}$ in MDP $\cM_k$. By a similar decomposition in Eq.~\eqref{eqn:R},
\begin{equation}\label{eqn:decom5}
    \begin{split}
        R_N(\tilde{\cM}_k) &\geq (H-1)\Big(\tilde{\mathbb E}_{k}\Big[\sum_{n=1}^{\tau_k-1}\langle \varphi(x_{\text{u}}, \tilde{a}^*), \tilde{\theta}^{(k)}\rangle\Big]-\tilde{\mathbb E}_{k}\Big[\sum_{n=1}^{\tau_k-1}\langle \varphi_{n}, \tilde{\theta}^{(k)}\rangle \Big]\Big)\\
        &= (H-1)\tilde{\mathbb E}_{k}\Big[2\tau_k(s-1)\varepsilon - \sum_{n=1}^{\tau_k-1}\langle\varphi_{n}, \tilde{\theta}^{(k)}\rangle \Big]\,.
    \end{split}
\end{equation}
 Next, we will find an upper bound for $\sum_{n=1}^{\tau_k-1}\langle \varphi_{n}, \tilde{\theta}^{(k)}\rangle$. From the definition of $\tilde{\theta}^{(k)}$ in Eq.~\eqref{def:x_tilde}, 
\begin{equation}\label{eqn:decom3}
\begin{split}
    \sum_{n=1}^{\tau_k-1}\langle \varphi_{n}, \tilde{\theta}^{(k)}\rangle &= \sum_{n=1}^{\tau_k-1}\langle \varphi_{n}, \theta + 2\varepsilon\tilde{z}^{(k)}\rangle\\
     &= \sum_{n=1}^{\tau_k-1}\langle\varphi_{n}, \theta\rangle + 2\varepsilon \sum_{n=1}^{\tau_k-1}\langle \varphi_{n}, \tilde{z}^{(k)}\rangle\\
     &\leq\sum_{n=1}^{\tau_k-1}\langle \varphi_{n}, \theta\rangle+2\varepsilon\sum_{n=1}^{\tau_k-1}\sum_{j\in\supp(\tilde{z}^{(k)})}|\varphi_{nj}|,
\end{split}
\end{equation}
where the last inequality is from the definition of $\tilde{z}^{(k)}$ in Eq.~\eqref{def:x_tilde}.
To bound the first term, we have
\begin{equation}\label{eqn:decom2}
    \begin{split}
        \sum_{n=1}^{\tau_k-1} \langle\varphi_{n}, \theta \rangle & = \sum_{n=1}^{\tau_k-1} \sum_{j=1}^{s-1}\varphi_{nj}\varepsilon \leq \varepsilon \sum_{n=1}^{\tau_k-1} \sum_{j=1}^{s-1}|\varphi_{nj}|.
    \end{split}
\end{equation}
Since all the $\varphi_{n}$ come from feature set $\cS$ which is a $(s-1)$-sparse set, we have 
    \begin{equation*}
    \sum_{n=1}^{\tau_k-1} \sum_{j=1}^d |\varphi_{nj}|= (s-1)(\tau_k-1),
\end{equation*}
Noting that the first $(s-1)$ coordinate of $\tilde{z}^{(k)}$ is 0, it implies 
\begin{equation}\label{eqn:decom4}
\begin{split}
    &\sum_{n=1}^{\tau_k-1} \Big(\sum_{j=1}^{s-1}|\varphi_{nj}|+\sum_{j\in\supp(\tilde{z}^{(k)})}|\varphi_{nj}|\Big)\leq \sum_{n=1}^{\tau_k-1} \sum_{j=1}^d |\varphi_{nj}| =(s-1)(\tau_k-1),\\
    &\sum_{n=1}^{\tau_k-1} \sum_{j=1}^{s-1}|\varphi_{nj}|\leq (s-1)(\tau_k-1)-\sum_{n=1}^{\tau_k-1} \sum_{j\in\supp(\tilde{z}^{(k)})}|\varphi_{nj}|.
\end{split}
\end{equation}
Combining with Eq.~\eqref{eqn:decom2},
\begin{equation*}
\begin{split}
    \sum_{n=1}^{\tau_k-1} \langle \varphi_{n}, \theta \rangle \leq \varepsilon \Big((s-1)(\tau_k-1) - \sum_{n=1}^{\tau_k-1}\sum_{j\in\supp(\tilde{z}^{(k)})}|\varphi_{nj}|\Big)
\end{split}
\end{equation*}
Plugging the above bound into Eq.~\eqref{eqn:decom3}, it holds that 
\begin{equation}\label{eqn:decom6}
     \sum_{n=1}^{\tau_k-1}\langle \varphi_{n}, \tilde{\theta}\rangle \leq \varepsilon (s-1)(\tau_k-1) + \varepsilon \sum_{n=1}^{\tau_k}\sum_{j\in\supp(\tilde{z}^{(k)})}|\varphi_{nj}|.
\end{equation}
When event $\cD_k^c$ (the complement event of $\cD_k$) happens, we have
\begin{equation*}
    \sum_{n=1}^{\tau_k-1} \sum_{j=1}^{s-1}|\varphi_{nj}|\geq \sum_{n=1}^{\tau_k-1} \sum_{j=1}^{s-1}\varphi_{nj}\geq \frac{(\tau_k-1)(s-1)}{2}.
\end{equation*}
Combining with Eq.~\eqref{eqn:decom4}, we have under event $\cD_k^c$, 
\begin{equation}\label{eqn:decom7}
    \sum_{n=1}^{\tau_k-1}\sum_{j\in\supp(\tilde{z}^{(k)})}|\varphi_{nj}| \leq \frac{(\tau_k-1)(s-1)}{2}.
\end{equation}
Putting Eqs.~\eqref{eqn:decom5}, \eqref{eqn:decom6}, \eqref{eqn:decom7} together, it holds that
\begin{equation}\label{eqn:bound_R_alt}
     R_{N}(\tilde{\cM}_k) \geq (H-1)\tilde{\mathbb E}_{k}\Big[ \frac{(\tau_k-1)(s-1)\varepsilon}{2}\mathbb I(\cD_k^c)\Big]\,. 
\end{equation}
Putting the lower bounds of $R_{N}(\cM_k)$ and $R_{N}(\tilde{\cM}_k)$ together, we have 
\begin{equation*}
\begin{split}
      R_{N}(\cM_k) + R_{N}(\tilde{\cM}_k) &\geq (H-1)\Big(\mathbb E_k\Big[\frac{(\tau_k-1)(s-1)\varepsilon}{2}\mathbb I(\cD_k)\Big] + \tilde{\mathbb E}_{k}\Big[ \frac{(\tau_k-1)(s-1)\varepsilon}{2}\mathbb I(\cD_k^c)\Big]\Big)\\
      &=\frac{Hs\varepsilon}{8}\Big(\mathbb E_k\Big[\tau_k\Big(\mathbb I(\cD_k)+\mathbb I(\cD_k^c)\Big)\Big]+\tilde{\mathbb E}_{k}[\tau_k\mathbb I(\cD_k^c)]-\mathbb E_{k}[\tau_k\mathbb I(\cD_k^c)]\Big)\\
      &=\frac{Hs\varepsilon}{8}\Big(\mathbb E_k[\tau_k]+\tilde{\mathbb E}_{k}[\tau_k\mathbb I(\cD_k^c)]-\mathbb E_{k}[\tau_k\mathbb I(\cD_k^c)]\Big).
\end{split}
\end{equation*}

This ends the proof.
\end{proof}
\subsection{Proof of Claim \ref{claim:KL_bound}}\label{sec:claim_KL}
\begin{proof}
The KL-calculation is inspired by \cite{jaksch2010near}, but with novel stopping time argument.
Denote the state-sequence up to $n$th episode, $h$th step  as $\mathbb S^n_h = \{S_1^1,\ldots,S_H^1, \ldots, S_1^n,\ldots,S_h^n\}$ and write $\cX^{n}_h = \{x_0, x_{\text{i}},x_{\text{u}}, x_{\text{g}}, x_{\text{b}}\}^{(n-1)H+h}$. 
For a fixed policy $\pi$ interacting with the environment for $n$ episodes, we denote $\mathbb P_{k}(\cdot)$ as the distribution over $\mathbb S^n$, where $S_1^n = x_0$, $A_h^n\sim \pi(\cdot|S_h^n)$, $S_{h+1}^n \sim \mathbb P_{k}(\cdot|S_h^n, A_h^n)$. Let $\EE_{k}$ denote the expectation w.r.t. distribution $\mathbb P_{k}$. By the chain rule, we can decompose the KL divergence as follows:
\begin{equation}\label{eqn:KL_decom}
    \text{KL}(\tilde{\mathbb P}_{k}\| \mathbb P_{k}) = \mathbb E\left[\sum_{n=1}^{\tau_k-1}\sum_{h=1}^{H}\text{KL}\Big[\tilde{\mathbb P}_{k}(S_{h+1}^n|\mathbb S_h^n)\Big\| \mathbb P_{k}(S_{h+1}^n|\mathbb S_h^n)\Big] \right]\,.
\end{equation}
Given a random variable $x$, the KL divergence over two conditional probability distributions is defined as 
\begin{equation*}
    \text{KL}\big(p(y|x), q(y|x)\big) = \sum_{x}\sum_{y} p(x,y) \log\left(\frac{p(y|x)}{q(y|x)}\right) \,.
\end{equation*}
Then the KL divergence between $\tilde{\mathbb P}_{k}(S_{h+1}^n|\mathbb S_h^n)$ and $\mathbb P_{k}(S_{h+1}^n|\mathbb S_h^n)$ can be calculated as follows:
\begin{equation}\label{eqn:KL_bound1}
\begin{split}
    &\text{KL}\Big[\tilde{\mathbb P}_{k}(S_{h+1}^n|\mathbb S_h^n)\Big\| \mathbb P_{k}(S_{h+1}^n|\mathbb S_h^n)\Big] \\
    &=\sum_{\mathbb S_h^n\in\cX_h^n}\sum_{x\in\cX}\tilde{\mathbb P}_{k}(S_{h+1}^n=x, \mathbb S_h^n)\log\left(\frac{\tilde{\mathbb P}_{k}(S_{h+1}^n=x|\mathbb S_h^n)}{\mathbb P_{k}(S_{h+1}^n=x|\mathbb S_h^n)}\right)\\
    &
    = \sum_{\mathbb S_h^n\in\cX_h^n}\sum_{x\in\cX}\tilde{\mathbb P}_{k}(S_{h+1}^n=x| \mathbb S_h^n)\tilde{\mathbb P}_{k}(  \mathbb S_h^n)\log\left(\frac{\tilde{\mathbb P}_{k}(S_{h+1}^n=x| \mathbb S_h^n)}{\mathbb P_{k}(S_{h+1}^n=x| \mathbb S_h^n)}\right)\\
     & = \sum_{\mathbb S_{h-1}^n\in\cX_{h-1}^n}\tilde{\mathbb P}_{k}(\mathbb S_{h-1}^n)\sum_{x' \in \cX, a \in \cA}\tilde{\mathbb P}_{k}(S_h^n = x',A_h^n = a|\mathbb S_{h-1}^n) \\
     &\qquad\cdot \sum_{x\in \cX} \tilde{\mathbb P}_{k}(S_{h+1}^n = x|\mathbb S_{h-1}^n, S_h^n = x', A_h^n = a)\log\left(\frac{\tilde{\mathbb P}_{k}(S_{h+1}^n =x|\mathbb S_{h-1}^n, S_h^n = x', A_h^n = a )}{\mathbb P_{k}(S_{h+1}^n =x|\mathbb S_{h-1}^n, S_h^n = x', A_h^n = a)}\right)\,.
    \end{split}
\end{equation}
According to the construction of  $\cM_k$ and $\tilde{\cM}_k$, the learner will remain staying at the current state when $x'=x_{\text{g}}$ or $x_{\text{b}}$, that implies
$$
\tilde{\mathbb P}_{k}(S_{h+1}^n = x|\mathbb S_{h-1}^n, S_h^n = x', A_h^n = a) = \mathbb P_{k}(S_{h+1}^n = x|\mathbb S_{h-1}^n, S_h^n = x', A_h^n = a)\,.
$$
In addition, from the definition of stopping time $\tau_k$, the learner will never transit to the informative state $x_{\text{i}}$. Therefore,
\begin{equation*}
    \begin{split}
        &\text{KL}\Big[\tilde{\mathbb P}_{k}(S_{h+1}^n|\mathbb S_{h}^n)\Big\| \mathbb P_{k}(S_{h+1}^n|\mathbb S_{h}^n)\Big]\\
     =& \sum_{\mathbb S_{h-1}^n \in \cX^{t-1}}\tilde{\mathbb P}_{k}(\mathbb S_{h-1}^n)\sum_{x'=x_0, x_{\text{i}}, x_{\text{u}}}\sum_{a\in\cA}\tilde{\mathbb P}_{k}(S_h^n = x', A_h^n = a|\mathbb S_{h-1}^n) \\
     &\qquad\cdot \sum_{x \in \cX} \tilde{\mathbb P}_{k}(S_{h+1}^n = x|\mathbb S_{h-1}^n, S_h^n = x', A_h^n = a)\log\left(\frac{\tilde{\mathbb P}_{k}(S_{h+1}^n = x|\mathbb S_{h-1}^n, S_h^n = x', A_h^n = a)}{\mathbb P_{k}(S_{h+1}^n = x|\mathbb S_{h-1}^n, S_h^n = x', A_h^n = a)} \right)\\
     =&\sum_{a\in \cA_2}\tilde{\mathbb P}_{k}(S_h^n = x_{\text{u}}, A_h^n = a) \sum_{x= x_{\text{g}},  x_{\text{b}}} \tilde{\mathbb P}_{k}(S_{h+1}^n = x| S_h^n = x_{\text{u}}, A_h^n = a)\log\left(\frac{\tilde{\mathbb P}_{k}(S_{h+1}^n =x| S_h^n = x_{\text{u}}, A_h^n = a )}{\mathbb P_{k}(S_{h+1}^n =x| S_h^n = x_{\text{u}}, A_h^n = a )}\right)\\
     =&\sum_{a\in\cA_2}\tilde{\mathbb P}_{k}(S_h^n = x_{\text{u}}, A_h^n = a)\Big(\langle \varphi(x_{\text{u}}, a), \tilde{\theta}^{(k)} \rangle \log\Big(\frac{\langle \varphi(x_{\text{u}}, a), \tilde{\theta}^{(k)} \rangle}{\langle \varphi(x_{\text{u}}, a), \theta \rangle}\Big) + (1-\langle \varphi(x_{\text{u}}, a), \tilde{\theta}^{(k)} \rangle) \log\Big(\frac{1-\langle \varphi(x_{\text{u}}, a), \tilde{\theta}^{(k)} \rangle}{1-\langle \varphi(x_{\text{u}}, a), \theta \rangle}\Big)\Big)\,,
     \end{split}
     \end{equation*}
    where $\cA_2$ is the action set associated to state $x_{\text{u}}$.
Moreover, we will use Lemma \ref{lemma:supp1} to bound the above last term. Letting $q = \langle \varphi(x_{\text{u}}, a), \tilde{\theta}^{(k)} \rangle$ and $\epsilon = \langle \varphi(x_{\text{u}}, a),\theta- \tilde{\theta}^{(k)} \rangle$, it is easy to verify the conditions in Lemma \ref{lemma:supp1} as long as $\varepsilon \leq (10(s-1))^{-1}$. Then we have
\begin{equation*}
    \begin{split}     
    \text{KL}\Big[\tilde{\mathbb P}_{k}(S_{h+1}^n|\mathbb S_{h}^n)\Big\| \mathbb P_{k}(S_{h+1}^n|\mathbb S_{h}^n)\Big] &\leq \sum_{a\in\cA_2}\tilde{\mathbb P}_{k}(S_h^n = x_{\text{u}},A_h^n = a) \frac{2\langle \tilde{\theta}^{(k)}-\theta, \varphi(x_{\text{u}}, a)\rangle^2}{\langle \tilde{\theta}^{(k)}, \varphi(x_{\text{u}}, a)\rangle} \\
&= \sum_{a\in\cA_2}\tilde{\mathbb P}_{k}(S_h^n = x_{\text{u}}, A_h^n = a) \frac{8\varepsilon^2\langle \tilde{z}^{(k)}, \varphi(x_{\text{u}}, a)\rangle^2}{\langle \tilde{\theta}, \varphi(x_{\text{u}}, a)\rangle} \,.
    \end{split}
\end{equation*}
Back to the KL-decomposition in Eq.~\eqref{eqn:KL_decom}, we have 
\begin{equation*}
  \text{KL}(\tilde{\mathbb P}_{k}\| \mathbb P_{k})\leq 8\varepsilon^2\tilde{\mathbb E}_k\Big[\sum_{n=1}^{\tau_k-1}\langle \varphi(x_{\text{u}}, A_2^n), \tilde{z}\rangle^2\Big]\,.
\end{equation*}
To simplify the notations, we let $\varphi_n = \varphi(x_{\text{u}}, A_2^n)$.

Next, we use a simple argument ``minimum is always smaller than the average". We decompose the following summation over action set $\cS'$ defined in Eq.~\eqref{def:S_prime},
\begin{equation*}
    \begin{split}
        \sum_{z\in\cS'} \sum_{n=1}^{\tau_k-1} \langle \varphi_n, z \rangle^2&= \sum_{z\in\cS'} \sum_{n=1}^{\tau_k-1} \Big(\sum_{j=1}^dz_j \varphi_{nj}\Big)^2\\
      &=  \sum_{z\in\cS'} \sum_{n=1}^{\tau_k-1} \Big(\sum_{j=1}^d\big(z_j \varphi_{nj}\big)^2 + 2\sum_{i<j}z_iz_j\varphi_{ni}\varphi_{nj}\Big).
    \end{split}
\end{equation*}
We bound the above two terms separately. 
To bound the first term, we observe that 
    \begin{equation}\label{eqn:decom_1}
    \begin{split}
         &\sum_{z\in\cS'} \sum_{n=1}^{\tau_k-1} \sum_{j=1}^d\big(z_j \varphi_{nj}\big)^2 = \sum_{z\in\cS'} \sum_{n=1}^{\tau_k-1}\sum_{j=1}^d|z_j \varphi_{nj}|,
    \end{split}
    \end{equation}
    since both $z_j, \varphi_{nj}$ can only take $-1, 0, +1$. In addition, $\sum_{t=1}^{\tau_k-1} \sum_{j=1}^d |\varphi_{nj}|= (s-1)(\tau_k-1).$
Since $z\in\cS'$ that is $(s-1)$-sparse, we have $\sum_{j=1}^d|z_j \varphi_{nj}| \leq s-1$. Therefore, we have
\begin{equation}\label{eqn:bound1}
\begin{split}
    \sum_{z\in\cS'}\sum_{n=1}^{\tau_k-1} \sum_{j=1}^d |z_j\varphi_{nj}|\leq (s-1)(\tau_k-1) \binom{d-s-1}{s-2}.
\end{split}
\end{equation}
Putting Eqs.~\eqref{eqn:decom_1} and \eqref{eqn:bound1}  together,
\begin{equation}\label{eqn:sum1}
     \sum_{z\in\cS'} \sum_{n=1}^{\tau_k-1} \sum_{j=1}^d\big(z_j \varphi_{nj}\big)^2 \leq (s-1)(\tau_k-1) \binom{d-s-1}{s-2}.
\end{equation}

To bound the second term, we observe 
\begin{equation*}
    \sum_{z\in\cS'} \sum_{n=1}^{\tau_k-1}  2\sum_{i<j}z_iz_j\varphi_{ni}\varphi_{nj}=2\sum_{n=1}^{\tau_k-1} \sum_{i<j} \sum_{z\in\cS'} z_iz_j\varphi_{ni}\varphi_{nj}.
\end{equation*}
From the definition of $\cS'$, $z_iz_j$ can only take values of $\{1*1, 1*-1, -1*1, -1*-1, 0\}$. This symmetry implies 
\begin{equation*}
    \sum_{z\in\cS'}z_iz_j\varphi_{ni}\varphi_{nj} = 0,
\end{equation*}
which implies 
\begin{equation}\label{eqn:sum2}
     \sum_{z\in\cS'} \sum_{n=1}^{\tau_k-1}  2\sum_{i<j}z_iz_j\varphi_{ni}\varphi_{nj} = 0.
\end{equation}

Combining Eqs.~\eqref{eqn:sum1} and \eqref{eqn:sum2} together, we have
\begin{equation*}
\begin{split}
\sum_{z\in\cS'} \sum_{n=1}^{\tau_k-1} \langle \varphi_{n}, z \rangle^2&= \sum_{z\in\cS'} \sum_{n=1}^{\tau_k-1} \sum_{j=1}^d|z_j \varphi_{nj}|\leq (s-1)(\tau_k-1) \binom{d-s-1}{s-2}.
\end{split}
\end{equation*}
In the end, we use the fact that the minimum of $\tau_k-1$ points is always smaller than its average,
\begin{equation*}
\begin{split}
    \tilde{\mathbb E}_k\Big[\sum_{n=1}^{\tau_k-1} \langle \varphi_{n}, \tilde{z} \rangle^2\Big] &=  \min_{z\in\cS'}\tilde{\mathbb E}_k\Big[\sum_{n=1}^{\tau_k-1} \langle \varphi_{n}, z\rangle^2\Big]\\
     &\leq \frac{1}{|\cS'|}\sum_{z\in\cS'}\tilde{\mathbb E}_k\Big[\sum_{n=1}^{\tau_k-1} \langle \varphi_{n}, z\rangle^2\Big]\\
     &=\tilde{\mathbb E}_k\Big[\frac{1}{|\cS'|}\sum_{z\in\cS'} \sum_{n=1}^{\tau_k-1} \langle \varphi_{n}, z \rangle^2\Big]\\
    &\leq \frac{(s-1) \tilde{\mathbb E}_k[\tau_k-1]\binom{d-s-1}{s-2}}{\binom{d-s}{s-1}}\\
    &\leq \frac{(s-1)^2 \tilde{\mathbb E}_k[\tau_k-1]}{d}\,.
\end{split}
\end{equation*}
Therefore, we reach 
\begin{equation*}
  \text{KL}(\tilde{\mathbb P}_{k}\| \mathbb P_{k})\leq \frac{8\varepsilon^2(s-1)^2 \tilde{\mathbb E}_k[\tau_k-1]}{d}\leq \frac{8\varepsilon^2(s-1)^2 N}{d}\leq 8\varepsilon^2(s-1)^2 \,,
\end{equation*}
since we consider the data-poor regime that $N\leq d$. It is obvious to see $  \text{KL}(\mathbb P_{0}\| \mathbb P_{k})=0$ from Eq.~\eqref{eqn:KL_bound1}.
This ends the proof.
\end{proof}
\subsection{Proof of Lemma \ref{lemma:estimation_error}}\label{sec:proof_estimation_error}
\begin{proof}
Recall that in the learning phase, we split the data collected in the exploration phase into $H$ folds and each fold consists of $R$ episodes or $RH$ sample transitions. For the update of each step $h$, we use a fresh fold of samples.

\textbf{Step 1.}  We verify that the execution of Lasso fitted-Q-iteration is equivalent to the approximate value iteration.  Recall that a generic Lasso estimator with respect to a function $V$ at step $h$ is defined in Eq.~\eqref{eqn:lasso} as 
\begin{equation*}
\hat{w}_{h}(V) = \argmin_{w\in\mathbb R^d}\Big(\frac{1}{RH}\sum_{i=1}^{RH}\Big(\Pi_{[0,H]}V(x_i^{(h)'}) -\phi(x_i^{(h)},a_i^{(h)})^{\top}w\Big)^2 + \lambda_1 \|w\|_1\Big).
\end{equation*}
Denote $V_w(x)=\max_{a\in\cA}(r(x,a)+ \phi(x,a)^{\top}w)$. For simplicity, we write $\hat{w}_h := \hat{w}_{h}(V_{\hat{w}_{h+1}})$ for short. Define an approximate Bellman optimality operator $\hat{\cT}^{(h)}:\cX\to\cX$ as:
\begin{equation}\label{eqn:T1}
    [\hat{\cT}^{(h)}V](x) := \max_a\Big[r(x,a) + \phi(x,a)^{\top}\hat{w}_h(V)\Big].
\end{equation}
Note this $\hat{\cT}^{(h)}$ is a randomized operator that only depends data from $h$th fold.
The Lasso fitted-Q-iteration in learning phase of Algorithm \ref{alg:estc} is equivalent to the following approximate value iteration:
\begin{equation}\label{eqn:api}
    [\hat{\cT}^{(h)}\Pi_{[0,H]}V_{\hat{w}_{h+1}}](x) = \max_a\Big[r(x,a) + \phi(x,a)^{\top}\hat{w}_h\Big] = \max_a Q_{\hat{w}_h}(x,a) = V_{\hat{w}_h}(x).
\end{equation}
Recall that the true Bellman optimality operator in state space $\cT:\cX\to\cX$ is defined as 
\begin{equation}\label{eqn:true_bell}
    [\cT V](x) := \max_a\Big[r(x,a)+ \sum_{x'}P(x'|x,a)V(x')\Big].
\end{equation}

\paragraph{Step 2.} We verify that the true Bellman operator on $\Pi_{[0,H]}V_{\hat{w}_{h+1}}$ can also be written as a linear form. From Definition  \ref{def:sparse_MDP}, there exists some functions $\psi(\cdot) = (\psi_k(\cdot))_{k\in\cK}$ such that  for every $x,a,x'$, the transition function can be represented as 
\begin{equation}\label{eqn:linear_MDP}
    P(x' |x,a) = \sum_{k\in \cK} \phi_k(x,a) \psi_k(x'),
\end{equation}
where $\cK\subseteq [d]$ and $|\cK|\leq s$. For a vector $\bar{w}_h\in\mathbb R^d$, we define its $k$th coordinate as
\begin{equation}\label{eqn:def_w_bar}
    \bar{w}_{h,k} = \sum_{x'} \Pi_{[0,H]} V_{\hat{w}_{h+1}}(x')\psi_k(x'), \ \text{if} \ k\in\cK,
\end{equation}
and $\bar{w}_{h,k}=0$ if $k\notin \cK$. By the definition of true Bellman optimality operator in Eq.~\eqref{eqn:true_bell} and Eq.~\eqref{eqn:linear_MDP},
\begin{eqnarray}\label{eqn:true_bell_linear}
     [\cT \Pi_{[0,H]}V_{\hat{w}_{h+1}}](x) &=& \max_a\Big[r(x,a)+ \sum_{x'}P(x'|x,a)\Pi_{[0,H]}V_{\hat{w}_{h+1}}(x')'\Big]\nonumber\\
     &=& \max_a\Big[r(x,a)+ \sum_{x'}\phi(x,a)^{\top}\psi(x')\Pi_{[0,H]}V_{\hat{w}_{h+1}}(x')'\Big]\nonumber\\
     &=& \max_a\Big[r(x,a)+ \sum_{x'}\sum_{k\in\cK}\phi_k(x,a)\psi_k(x')\Pi_{[0,H]}V_{\hat{w}_{h+1}}(x')'\Big]
    \nonumber\\
    &=& \max_a\Big[r(x,a)+ \sum_{k\in\cK}\phi_k(x,a)\sum_{x'}\psi_k(x')\Pi_{[0,H]}V_{\hat{w}_{h+1}}(x')'\Big]
    \nonumber\\
      &=& \max_a\Big[r(x,a)+\phi(x,a)^{\top}\bar{w}_h\Big]\,.
\end{eqnarray}
We interpret $\bar{w}_h$ as the ground truth of the Lasso estimator in Eq.~\eqref{eqn:lasso} at step $h$ in terms of the following sparse linear regression:
\begin{eqnarray}\label{eqn:regression_eva}
    \Pi_{[0,H]}V_{\hat{w}_{h+1}}(x_i') = \phi(x_i, a_i)^{\top}\bar{w}_h+\varepsilon_i, i=1\ldots, RH,
\end{eqnarray}
where $\varepsilon_i =  \Pi_{[0,H]}V_{\hat{w}_{h+1}}(x_i') -\phi(x_i, a_i)^{\top}\bar{w}_h$. Define the filtration $\cF_i$ generated by $\{(x_1, a_1),\ldots, (x_{i}, a_{i})\}$ and also the data in folds $h+1$ to $H$. By the definition of $V_{\hat{w}_{h+1}}$ and $\bar{w}_h$, we have 
\begin{equation*}
    \begin{split}
        \mathbb E[\varepsilon_{i}|\cF_i] &=   \mathbb E\big[\Pi_{[0,H]}V_{\hat{w}_{h+1}}(x_i')|\cF_i\big] -\phi(x_i, a_i)^{\top}\bar{w}_h\\
        &= \sum_{x'}[\Pi_{[0,H]}  V_{\hat{w}_{h+1}}](x')P(x'|x_i, a_i) -\phi(x_i, a_i)^{\top}\bar{w}_h\\
        & = \sum_{k\in\cK}\phi_k(x_i, a_i)\sum_{x'}[\Pi_{[0,H]}V_{\hat{w}_{h+1}}](x')\psi_k(x')-\phi(x_i, a_i)^{\top}\bar{w}_h = 0.
    \end{split}
\end{equation*}
Therefore, $\{\varepsilon_i\}_{i=1}^{RH}$ is a sequence of martingale difference noises and $|\varepsilon_i|\leq H$ due to the truncation operator $\Pi_{[0,H]}$. 
The next lemma bounds the difference between $\hat{w}_h$ and $\bar{w}_h$ within $\ell_1$-norm.  
The proof is deferred to Appendix \ref{sec:proof_lasso_l1}.

\begin{lemma}\label{lemma:lasso_l1_bound}
Consider the sparse linear regression described in Eq.~\eqref{eqn:regression_eva}. Suppose the number of episodes used in step $h$ satisfies
\begin{equation*}
    R\geq \frac{C_1\log(3d^2/\delta)s^2}{C_{\min}(\Sigma^{\pi_e}, s)},
\end{equation*}
for some absolute constant $C_1>0.$ With the choice of $\lambda_1=H\sqrt{\log (2d/\delta)/(RH)}$, the following holds with probability at least $1-\delta$, 
\begin{equation}\label{eqn:lasso_error_bound}
  \big\|\hat{w}_h-\bar{w}_h\big\|_1\leq \frac{16\sqrt{2}s}{C_{\min}(\Sigma^{\pi_e}, s)}H\sqrt{\frac{\log(2d/\delta)}{RH}}.
\end{equation}
\end{lemma}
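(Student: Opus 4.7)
The plan is to follow the standard Lasso template: reduce the $\ell_1$ error to the intersection of a noise-concentration event $\cE_1$ and a restricted-eigenvalue event $\cE_2$, verify that both hold with total probability $\geq 1-\delta$, and then conclude deterministically on $\cE_1 \cap \cE_2$. Concretely, set $\Delta = \hat w_h - \bar w_h$ and $\hat\Sigma = (RH)^{-1}\Phi^\top\Phi$, where $\Phi$ is the design matrix of the $h$th fold. The Lasso optimality of $\hat w_h$ combined with $y_i = \phi(x_i, a_i)^\top \bar w_h + \varepsilon_i$ and Hölder's inequality gives the basic inequality
\begin{equation*}
\Delta^\top \hat\Sigma \Delta \;\leq\; \bigl\|\Phi^\top \varepsilon/(RH)\bigr\|_\infty \|\Delta\|_1 + \lambda_1\bigl(\|\Delta_\cK\|_1 - \|\Delta_{\cK^c}\|_1\bigr),
\end{equation*}
where I used the fact that $\bar w_h$ is supported on $\cK$ with $|\cK|\leq s$. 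On the event $\cE_1 = \{\|\Phi^\top\varepsilon/(RH)\|_\infty \leq \lambda_1/2\}$ the usual algebra simultaneously produces the cone condition $\|\Delta_{\cK^c}\|_1 \leq 3\|\Delta_\cK\|_1$ and the tightened inequality $\Delta^\top\hat\Sigma\Delta \leq \tfrac32\lambda_1 \|\Delta_\cK\|_1$.

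For the noise event $\cE_1$, the key observation is that $\varepsilon_i = \Pi_{[0,H]}V_{\hat w_{h+1}}(x_i') - \phi(x_i,a_i)^\top\bar w_h$ is bounded by $H$ and, by the definition of $\bar w_h$ in Eq.~\eqref{eqn:def_w_bar} combined with the data-splitting scheme (the weight $\hat w_{h+1}$ is built from folds $\neq h$, so it is independent of the fold-$h$ randomness beyond the past), satisfies $\E[\varepsilon_i \mid \cF_i]=0$ with $\cF_i$ the filtration generated by $\{(x_j,a_j,x_j')\}_{j<i}$ together with the future folds. Hence, for each fixed coordinate $k\in[d]$, $\{\phi_k(x_i,a_i)\varepsilon_i\}_i$ is a bounded martingale-difference sequence with $|\phi_k(x_i,a_i)\varepsilon_i|\leq H$, and Azuma--Hoeffding with a union bound over $k\in[d]$ shows $\Prob{\cE_1^c}\leq \delta/2$ for the stated $\lambda_1$ (the remaining numerical slack gets absorbed into the final $16\sqrt 2$ constant).

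The main obstacle is transferring the restricted-eigenvalue condition from the population covariance $\Sigma^{\pi_e}$ to the empirical one $\hat\Sigma$. Let $\cE_2 = \{C_{\min}(\hat\Sigma, s) \geq \tfrac12 C_{\min}(\Sigma^{\pi_e}, s)\}$. Because within-episode samples are Markov-correlated, I cannot apply matrix concentration sample-by-sample; instead I aggregate by episode. The $R$ per-episode Gram matrices $\hat\Sigma^{(r)} = H^{-1}\sum_{h=1}^H \phi(x_h^r,a_h^r)\phi(x_h^r,a_h^r)^\top$ are i.i.d.\ under $\pi_e$ with mean $\Sigma^{\pi_e}$ and entries bounded in absolute value by $1$, so Hoeffding applied entrywise together with a union bound over the $\leq d^2$ entries gives $\|\hat\Sigma - \Sigma^{\pi_e}\|_{\max} \lesssim \sqrt{\log(3d^2/\delta)/R}$ with probability $\geq 1-\delta/2$. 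A Rudelson--Zhou-style conversion lemma then upgrades an entrywise max deviation into a deviation of the quadratic form restricted to the cone $\{\|\beta_{\cK^c}\|_1 \leq 3\|\beta_\cK\|_1\}$, at the cost of a factor $s$:
\begin{equation*}
\bigl|\beta^\top(\hat\Sigma - \Sigma^{\pi_e})\beta\bigr| \;\lesssim\; s\,\|\hat\Sigma - \Sigma^{\pi_e}\|_{\max}\,\|\beta_\cK\|_2^2.
\end{equation*}
The hypothesis $R \geq C_1 s^2 \log(3d^2/\delta)/C_{\min}(\Sigma^{\pi_e},s)$ is the threshold that makes the right-hand side at most $\tfrac12 C_{\min}(\Sigma^{\pi_e},s)\|\beta_\cK\|_2^2$, yielding $\cE_2$.

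On $\cE_1 \cap \cE_2$, the cone condition from the first step places $\Delta$ in the feasible set of $C_{\min}(\hat\Sigma, s)$, so $\Delta^\top\hat\Sigma\Delta \geq \tfrac12 C_{\min}(\Sigma^{\pi_e}, s)\|\Delta_\cK\|_2^2$. Combining with $\Delta^\top\hat\Sigma\Delta \leq \tfrac32\lambda_1\sqrt{s}\|\Delta_\cK\|_2$ (using $\|\Delta_\cK\|_1 \leq \sqrt{s}\|\Delta_\cK\|_2$) gives $\|\Delta_\cK\|_2 \lesssim \sqrt{s}\lambda_1/C_{\min}(\Sigma^{\pi_e},s)$, and the cone condition converts this into $\|\Delta\|_1 \leq 4\|\Delta_\cK\|_1 \leq 4\sqrt{s}\|\Delta_\cK\|_2 \lesssim s\lambda_1/C_{\min}(\Sigma^{\pi_e},s)$. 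Substituting $\lambda_1 = H\sqrt{\log(2d/\delta)/(RH)}$ and tracking the absolute constants through the basic inequality, Azuma bound, and RE-transfer produces the claimed $16\sqrt{2}$ prefactor. The one step I expect to require the most care is the RE transfer: the rest is textbook, but here one must be disciplined about exploiting per-episode i.i.d.\ structure rather than hoping for sample-wise independence that does not hold inside an episode.
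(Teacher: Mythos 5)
Your proposal follows essentially the same route as the paper's proof: a martingale-difference Azuma--Hoeffding bound for the noise event, per-episode i.i.d.\ entrywise Hoeffding concentration of the empirical covariance followed by an RE-transfer lemma (the paper cites Corollary 6.8 of B\"uhlmann and van de Geer where you invoke a Rudelson--Zhou-style conversion), and the standard cone-condition algebra (which the paper outsources to inequality (B.31) of Bickel, Ritov and Tsybakov). The argument is correct, including the key observation that concentration must be applied at the episode level rather than the sample level to respect the within-episode Markov dependence.
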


\paragraph{Step 3.} We start to bound $\|V_{\hat{w}_{h}}-V_h^*\|_{\infty}$ for each step $h$. By the approximate value iteration form Eq.~\eqref{eqn:api} and the definition of optimal value function,
\begin{equation}\label{eqn:V_decomp}
    \begin{split}
         \big\|V_{\hat{w}_{h}}-V_h^*\big\|_{\infty}&= \big\|\hat{\cT}^{(h)}\Pi_{[0,H]}V_{\hat{w}_{h+1}} - \cT V_{h+1}^*\big\|_{\infty} \\
   &=\big\|\hat{\cT}^{(h)}\Pi_{[0,H]}V_{\hat{w}_{h+1}} - \cT\Pi_{[0,H]}V_{\hat{w}_{h+1}} \big\|_{\infty} + \big\|\cT\Pi_{[0,H]}V_{\hat{w}_{h+1}} - \cT V_{h+1}^* \big\|_{\infty}.
    \end{split}
\end{equation}
The first term mainly captures the error between approximate Bellman optimality operator and true Bellman optimality operator. From linear forms Eqs.~\eqref{eqn:api} and \eqref{eqn:true_bell_linear}, it holds for any $x\in\cX$,
\begin{eqnarray}\label{eqn:approx_Bellman}
   &&[\hat{\cT}^{(h)}\Pi_{[0,H]}V_{\hat{w}_{h+1}}](x) - [\cT\Pi_{[0,H]}V_{\hat{w}_{h+1}}](x)\nonumber \\
   &=& \max_a\Big[r(x,a) + \phi(x,a)^{\top}\hat{w}_h\Big] - \max_a\Big[r(x,a)+ \phi(x,a)^{\top}\bar{w}_h\Big]\nonumber\\
   &\leq&  \max_a\big|\phi(x,a)^{\top}(\hat{w}_h-\bar{w}_h)\big|\nonumber\\
   &\leq&  \max_{a,x}\|\phi(x,a)\|_{\infty}\|\hat{w}_h-\bar{w}_h\|_1.
\end{eqnarray}
Applying Lemma \ref{lemma:lasso_l1_bound}, the following error bound holds with probability at least $1-\delta$, 
\begin{equation}\label{eqn:lasso_error_bound2}
  \big\|\hat{w}_h-\bar{w}_h\big\|_1\leq \frac{16\sqrt{2}s}{C_{\min}(\Sigma^{\pi_e}, s)}H\sqrt{\frac{\log(2d/\delta)}{RH}},
\end{equation}
where $R$ satisfies $ R\geq C_1\log(3d^2/\delta)s^2/C_{\min}(\Sigma^{\pi_e}, s).$

Note that the samples we use between phases are mutually independent. Thus Eq.~\eqref{eqn:lasso_error_bound2} uniformly holds for all $h\in[H]$ with probability at least $1-H\delta$. Plugging it into Eq.~\eqref{eqn:approx_Bellman}, we have for any stage $h\in[H]$,
\begin{eqnarray}\label{eqn:bound_1}
    \big\|\hat{\cT}^{(h)}\Pi_{[0,H]}V_{\hat{w}_{h+1}} - \cT\Pi_{[0,H]}V_{\hat{w}_{h+1}} \big\|_{\infty}\leq \frac{16\sqrt{2}s}{C_{\min}(\Sigma^{\pi_e}, s)}H\sqrt{\frac{\log(2dH/\delta)}{RH}},
\end{eqnarray}
holds with probability at least $1-\delta$. 

To bound the second term in Eq.~\eqref{eqn:V_decomp}, we observe that
\begin{equation}\label{eqn:bound_2}
\begin{split}
     \big\|\cT\Pi_{[0,H]}V_{\hat{w}_{h+1}} - \cT V_{h+1}^* \big\|_{\infty} &=\max_x\big|\cT\Pi_{[0, H]}V_{\hat{w}_{h+1}}(x)-\cT V_{h+1}^*(x)\big|\\
     &\leq \max_x\max_a\big|\sum_{x'} P(x'|x,a)\Pi_{[0, H]} V_{\hat{w}_{h+1}}(x')-\sum_{x'} P(x'|x,a)\Pi_{[0, H]} V_{h+1}^*(x')\big|\\
     &\leq \big\|\Pi_{[0,H]}V_{\hat{w}_{h+1}} - V_{h+1}^*\big\|_{\infty}\,.
\end{split}
\end{equation}
Plugging Eqs.~\eqref{eqn:bound_1} and \eqref{eqn:bound_2} into Eq.~\eqref{eqn:V_decomp}, it holds that
\begin{equation}\label{eqn:V_bound}
    \big\|V_{\hat{w}_{h}}-V_h^*\big\|_{\infty}\leq  \frac{16\sqrt{2}s}{C_{\min}(\Sigma^{\pi_e}, s)}H\sqrt{\frac{\log(2dH/\delta)}{RH}}+\big\|\Pi_{[0,H]}V_{\hat{w}_{h+1}} - V_{h+1}^*\big\|_{\infty}\,,
\end{equation}
with probability at least $1-\delta$. Recursively using Eq.~\eqref{eqn:V_bound}, the following holds with probability $1-\delta$,
\begin{align*}
         \big\|\Pi_{[0,H]}V_{\hat{w}_{1}}- V_1^*\big\|_{\infty}
         &\leq \big\|V_{\hat{w}_{1}}- V_1^*\big\|_{\infty}\\
    &=  \frac{16\sqrt{2}s}{C_{\min}(\Sigma^{\pi_e}, s)}H\sqrt{\frac{\log(2dH/\delta)}{RH}} +  \big\|\Pi_{[0,H]}V_{\hat{w}_{2}} -  V_2^* \big\|_{\infty}\\
    &\leq  \big\|\Pi_{[0,H]}V_{\hat{w}_{H+1}} -  V_{H+1}^* \big\|_{\infty} + H^2 \frac{16\sqrt{2}s}{C_{\min}(\Sigma^{\pi_e}, s)}\sqrt{\frac{\log(2dH/\delta)}{RH}}\\
    &=H^2 \frac{16\sqrt{2}s}{C_{\min}(\Sigma^{\pi_e}, s)}\sqrt{\frac{\log(2dH/\delta)}{RH}}\,,
\end{align*}
where the first inequality is due to that $\Pi_{[0,H]}$ can only make error smaller and the last inequality is due to $V_{\hat{w}_{H+1}}=V_{H+1}^*=0$. From Proposition 2.14 in \cite{bertsekas1995dynamic}, \begin{equation}\label{eqn:upper_bound}
     \big\|V_1^{\hat{\pi}_{N_1}} - V_1^*\big\|_{\infty} \leq H\big\|Q_{\hat{w}_1}-Q_1^*\big\|_{\infty} \leq 2H\big\|\Pi_{[0,H]}V_{\hat{w}_{1}}-V_1^*\big\|_{\infty}\,. 
\end{equation}
Putting the above together, we have with probability at least $1-\delta$,
\begin{eqnarray*}
    \big\|V_1^{\hat{\pi}_{N_1}} - V_1^*\big\|_{\infty}\leq  \frac{32\sqrt{2}sH^3}{C_{\min}(\Sigma^{\pi_e}, s)}\sqrt{\frac{\log(2dH/\delta)}{N_1}}\,,
\end{eqnarray*}
when the number of episodes in the exploration phase has to satisfy
\begin{equation*}
    N_1\geq \frac{C_1s^2H\log(3d^2/\delta)}{C_{\min}(\Sigma^{\pi_e}, s)},
\end{equation*}
for some sufficiently large constant $C_1$. This ends the proof.  
\end{proof}

\subsection{Proof of Lemma \ref{lemma:lasso_l1_bound}}\label{sec:proof_lasso_l1}
\begin{proof}
Denote the empirical covariance matrix induced by the exploratory policy $\pi_e$ and feature map $\phi$ as
\begin{equation*}
    \hat{\Sigma}^{\pi_e}:= \frac{1}{R}\sum_{r=1}^R \frac{1}{H}\sum_{h=1}^H \phi(x_h^{r}, a_h^{r})\phi(x_h^{r}, a_h^{r})^{\top} .
\end{equation*}
Recall that $\Sigma^{\pi_e}$ is the population covariance matrix induced by the exploratory policy $\pi_e$ defined in Eq.~\eqref{eqn:expected_cov} and feature map $\phi$ with $\sigma_{\min}(\Sigma^{\pi_e})>0$. From the definition of restricted eigenvalue in \eqref{def:RE} it is easy to verify $C_{\min}(\Sigma^{\pi_e}, s)\geq \sigma_{\min}(\Sigma^{\pi_e})>0$. For any $i,j\in[d]$, denote
\begin{equation*}
    v_{ij}^{r} = \frac{1}{H}\sum_{h=1}^{H}\phi_i(x_h^{r}, a_h^{r})\phi_j(x_h^{r}, a_h^{r})-\Sigma^{\pi_e}_{ij}.
\end{equation*}
It is easy to verify $\mathbb E[v_{ij}^{r}] = 0$ and $|v_{ij}^{r}|\leq 1$ since  we assume $\|\phi(x,a)\|_{\infty}\leq 1$. Note that samples between different episodes are independent. This implies $v_{ij}^{1}, \ldots,v_{ij}^{R}$ are independent. By standard Hoeffding's inequality (Proposition 5.10 in \cite{vershynin2010introduction}), we have 
\begin{equation*}
    \mathbb P\Big(\Big|\sum_{r=1}^Rv_{ij}^{r}\Big|\geq \delta\Big)\leq 3\exp\Big(-\frac{C_0\delta^2}{R}\Big),
\end{equation*}
for some absolute constant $C_0>0$. Applying an union bound over $i,j\in[d]$, we have
\begin{equation*}
\begin{split}
    &\mathbb P\Big(\max_{i,j}\Big|\sum_{r=1}^Rv_{ij}^{r}\Big|\geq \delta\Big)\leq 3d^2\exp\Big(-\frac{C_0\delta^2}{R}\Big)\\
    &\Rightarrow \mathbb P\Big(\big\|\hat{\Sigma}^{\pi_e} -\Sigma^{\pi_e}\big\|_{\infty}\geq \delta\Big)\leq 3d^2\exp\Big(-\frac{C_0\delta^2}{R}\Big).
\end{split}
\end{equation*}
It implies the following holds with probability $1-\delta$,
\begin{equation*}
    \big\|\hat{\Sigma}^{\pi_e} -\Sigma^{\pi_e}\big\|_{\infty}\leq \sqrt{\frac{\log (3d^2/\delta)}{R}}.
\end{equation*}
When the number of episodes $R\geq 32^2\log (3d^2/\delta)s^2/C_{\min}(\Sigma^{\pi_e}, s)^2$, the following holds with probability at least $1-\delta$,
\begin{equation*}
     \big\|\hat{\Sigma}^{\pi_e} -\Sigma^{\pi_e}\big\|_{\infty}\leq \frac{C_{\min}(\Sigma^{\pi_e}, s)}{32s}.
\end{equation*}
Next lemma shows that if the restricted eigenvalue condition holds for one positive semi-definite matrix $\Sigma_0$, then it holds with high probability for another positive semi-definite matrix $\Sigma_1$ as long as $\Sigma_0$ and $\Sigma_1$ are close enough in terms of entry-wise max norm.
\begin{lemma}[Corollary 6.8 in \citep{buhlmann2011statistics}]\label{lemma:eigen_concentration}
Let $\Sigma_0$ and $\Sigma_1$ be 
 two positive semi-definite block diagonal matrices. 
 Suppose that the restricted eigenvalue of $\Sigma_0$ satisfies $C_{\min}(\Sigma_0, s)>0$ and $\|\Sigma_1-\Sigma_0\|_{\infty}\leq C_{\min}(\Sigma_0, s)/(32s)$. 
 Then the restricted eigenvalue of $\Sigma_1$  satisfies $C_{\min}(\Sigma_1, s)>C_{\min}(\Sigma_0, s)/2$.
\end{lemma}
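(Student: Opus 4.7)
\textbf{Proof proposal for Lemma~\ref{lemma:eigen_concentration}.} The plan is to bound the quadratic form $\langle \beta, \Sigma_1 \beta\rangle$ from below for any $\beta$ satisfying the cone condition in Definition~\ref{def:RE} by writing it as $\langle \beta, \Sigma_0 \beta\rangle + \langle \beta, (\Sigma_1 - \Sigma_0)\beta\rangle$, lower-bounding the first piece by the restricted eigenvalue of $\Sigma_0$, and absorbing the perturbation piece into half of that lower bound using the $\ell_\infty/\ell_1$ duality together with a sparsity-based control of $\|\beta\|_1$.

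Fix any $\cS\subset[d]$ with $|\cS|\le s$ and any $\beta\in\R^d$ with $\|\beta_{\cS^c}\|_1 \le 3\|\beta_{\cS}\|_1$. First, by the definition of $C_{\min}(\Sigma_0,s)$ applied to the same $(\cS,\beta)$,
\begin{equation*}
\langle \beta, \Sigma_0 \beta\rangle \;\ge\; C_{\min}(\Sigma_0,s)\,\|\beta_{\cS}\|_2^2.
\end{equation*}
Next, by the variational characterization of the entrywise max norm,
\begin{equation*}
\bigl|\langle \beta, (\Sigma_1-\Sigma_0)\beta\rangle\bigr| \;\le\; \|\Sigma_1-\Sigma_0\|_\infty \, \|\beta\|_1^2.
\end{equation*}
The cone condition then gives $\|\beta\|_1 = \|\beta_{\cS}\|_1 + \|\beta_{\cS^c}\|_1 \le 4\|\beta_{\cS}\|_1 \le 4\sqrt{s}\,\|\beta_{\cS}\|_2$ by Cauchy--Schwarz on the at most $s$ nonzero coordinates of $\beta_{\cS}$, so $\|\beta\|_1^2 \le 16s\,\|\beta_{\cS}\|_2^2$.

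Combining the two estimates and substituting the hypothesis $\|\Sigma_1-\Sigma_0\|_\infty \le C_{\min}(\Sigma_0,s)/(32s)$,
\begin{equation*}
\langle \beta, \Sigma_1\beta\rangle \;\ge\; C_{\min}(\Sigma_0,s)\,\|\beta_{\cS}\|_2^2 - \frac{C_{\min}(\Sigma_0,s)}{32s}\cdot 16s\,\|\beta_{\cS}\|_2^2 \;=\; \tfrac{1}{2}\,C_{\min}(\Sigma_0,s)\,\|\beta_{\cS}\|_2^2.
\end{equation*}
Taking the infimum over admissible $(\cS,\beta)$ yields $C_{\min}(\Sigma_1,s) \ge \tfrac{1}{2} C_{\min}(\Sigma_0,s)$, and since $C_{\min}(\Sigma_0,s)>0$ the strict inequality stated in the lemma follows from the strict inequality in the hypothesis once one checks any $\beta$ attaining equality must be the zero vector (which is excluded by the ratio form).

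I do not anticipate a serious obstacle here; the only mildly delicate step is the use of the cone condition to upgrade a bound on $\|\beta_{\cS}\|_1$ to a bound on the full $\|\beta\|_1$, which is exactly the mechanism that makes restricted eigenvalues robust to $\ell_\infty$ perturbations. The block-diagonal hypothesis plays no role in this short argument, but it is harmless; alternatively one can argue blockwise, since for a block-diagonal matrix the quadratic form splits across blocks and the cone condition restricts how much mass can sit outside the support, so the perturbation is controlled uniformly across blocks.
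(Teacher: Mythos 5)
Your argument is correct, and it is worth noting that the paper itself gives no proof of this lemma at all: it is imported wholesale as Corollary~6.8 of \citet{buhlmann2011statistics}, so your write-up is a self-contained replacement for a citation rather than an alternative to an argument in the text. The three ingredients you use --- the defining lower bound $\langle\beta,\Sigma_0\beta\rangle\ge C_{\min}(\Sigma_0,s)\|\beta_{\cS}\|_2^2$, the elementary bound $|\langle\beta,(\Sigma_1-\Sigma_0)\beta\rangle|\le\|\Sigma_1-\Sigma_0\|_\infty\|\beta\|_1^2$, and the cone-plus-Cauchy--Schwarz chain $\|\beta\|_1\le 4\|\beta_{\cS}\|_1\le 4\sqrt{s}\,\|\beta_{\cS}\|_2$ --- are exactly the standard mechanism, and the constants close precisely: $16s\cdot C_{\min}(\Sigma_0,s)/(32s)=\tfrac12 C_{\min}(\Sigma_0,s)$, matching the stated conclusion with no slack. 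You are also right that the block-diagonal hypothesis is inert here; it is an artifact of how the corollary is stated in the book. The only blemish is the final sentence about strict inequality: your argument cleanly delivers $C_{\min}(\Sigma_1,s)\ge\tfrac12 C_{\min}(\Sigma_0,s)$, and the attempt to upgrade $\ge$ to $>$ by ruling out equality does not actually work (the perturbation bound can be tight in principle, and the infimum over the cone need not be attained). This is cosmetic --- the factor $1/2$ is arbitrary and every downstream use in the paper only needs a positive constant lower bound --- but the honest statement of what you prove is the non-strict version.
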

Applying Lemma \ref{lemma:eigen_concentration} with $\hat{\Sigma}^{\pi_e}$ and $\Sigma^{\pi_e}$, we have the restricted eigenvalue of $\hat{\Sigma}^{\pi_e}$ satisfies $C_{\min}(\hat{\Sigma}^{\pi_e}, s)>C_{\min}(\Sigma^{\pi_e}, s)/2$ with high probability.

Note that $\{\varepsilon_i\phi_j(x_i, a_i)\}_{i=1}^{RH}$ is also a martingale difference sequence and $|\varepsilon_i\phi_j(x_i, a_i)|\leq H$.  By Azuma-Hoeffding inequality,
\begin{equation*}
    \mathbb P\Big(\max_{j\in[d]}\Big|\frac{1}{RH}\sum_{i=1}^{RH}\varepsilon_i\phi_{j}(x_i, a_i)\Big|\leq H\sqrt{\frac{\log (2d/\delta)}{RH}}\Big)\geq 1-\delta.
\end{equation*}
Denote event $\cE$ as 
\begin{equation*}
    \cE = \Big\{\max_{j\in[d]}\Big|\frac{1}{RH}\sum_{i=1}^{RH}\varepsilon_i\phi_{j}(x_i, a_i)\Big|\leq \lambda_1\Big\}.
\end{equation*}
Then $\mathbb P(\cE)\geq 1-\delta$. Under event $\cE$, applying (B.31) in \cite{bickel2009simultaneous}, we have
\begin{equation*}
    \big\|\hat{w}_h-\bar{w}_h\big\|_1\leq \frac{16\sqrt{2}s\lambda_1}{C_{\min}(\Sigma^{\pi_e}, s)},
\end{equation*}
holds with probability at least $1-2\delta$. This ends the proof. 

\end{proof}

\section{Supporting lemmas}\label{sec:supporting_lemmas}
\begin{lemma}[Pinsker's inequality]
Denote $\xb = \{x_1, \ldots, x_T\}\in\cX^{T}$ as the observed states from step 1 to $T$. Then for any two distributions $P_1$ and $P_2$ over $\cX^{\top}$ and any bounded function $f:\cX^{\top}\to[0, B]$, we have 
\begin{equation*}
    \mathbb E_1f(\xb) - \mathbb E_2f(\xb)\leq \sqrt{\log 2/2}B\sqrt{\KL(P_2\|P_1)},
\end{equation*}
where $\mathbb E_1$ and $\mathbb E_2$ are expectations with respect to $P_1$ and $P_2$.
\end{lemma}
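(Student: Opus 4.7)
The plan is a two-step reduction: first bound the expectation gap by the total variation distance between $P_1$ and $P_2$, then invoke Pinsker's inequality to bound total variation by KL divergence.

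First I would let $\mu$ be any common dominating measure (e.g.\ $\mu = (P_1 + P_2)/2$) with densities $p_1 = dP_1/d\mu$ and $p_2 = dP_2/d\mu$, and write
\begin{equation*}
\mathbb E_1 f(\mathbf x) - \mathbb E_2 f(\mathbf x) = \int f(\mathbf x)\bigl(p_1(\mathbf x) - p_2(\mathbf x)\bigr)\,d\mu(\mathbf x).
\end{equation*}
Splitting the integral over $A = \{p_1 > p_2\}$ and $A^c$, the contribution from $A^c$ is non-positive because $f \geq 0$ and $p_1 - p_2 \leq 0$ there, so I would drop it. On $A$ I would use the sup-bound $f \leq B$ to obtain
\begin{equation*}
\mathbb E_1 f(\mathbf x) - \mathbb E_2 f(\mathbf x) \;\leq\; B \int_A \bigl(p_1 - p_2\bigr)\,d\mu \;=\; B\,\|P_1 - P_2\|_{\mathrm{TV}},
\end{equation*}
where the last equality is the standard variational characterisation of total variation.

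Next I would invoke Pinsker's inequality, which gives $\|P_1-P_2\|_{\mathrm{TV}} \leq \sqrt{\tfrac{1}{2}\KL(P_2\|P_1)}$ (using the symmetry of total variation so the KL can be oriented as stated in the lemma). Combining the two steps yields the claimed bound up to the constant in front. The routine route gives the constant $1/\sqrt{2}$; the slightly different constant $\sqrt{\log 2 / 2}$ stated in the lemma is the same bound under the convention that KL is measured in bits (base-2 logarithm) rather than nats, which converts $\sqrt{1/(2\ln 2)} \cdot \sqrt{\ln 2}$ into the stated form, so I would flag this convention at the start of the proof.

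The only real content is Pinsker itself, for which I would simply cite the standard reference; the rest is a one-line inequality. The main (minor) obstacle is just being careful about the direction of the KL (using that total variation is symmetric while KL is not) and about the normalising constant conventions in the definition of $\KL$.
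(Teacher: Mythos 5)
The paper does not actually prove this lemma --- it is stated in the ``Supporting lemmas'' appendix as a standard fact without any argument (unlike the neighbouring Bretagnolle--Huber and divergence-decomposition lemmas, it is not even given a citation) --- so there is no in-paper proof to compare against. Your two-step reduction is the standard and correct way to establish it: the split over $A=\{p_1>p_2\}$ is valid (the $A^c$ contribution is nonpositive since $f\ge 0$, and on $A$ the bound $f\le B$ gives $B\int_A(p_1-p_2)\,d\mu = B\sup_{A'}(P_1(A')-P_2(A'))$), and the symmetry of total variation lets you orient Pinsker as $\KL(P_2\Vert P_1)$ as required. You are also right to flag the constant: your derivation yields $B\sqrt{\tfrac{1}{2}\KL}$, whereas the stated $\sqrt{\log 2/2}$ is strictly smaller if $\log$ is the natural logarithm and KL is in nats, so the lemma as literally written only follows under a base-$2$ convention for the KL (or if $\log$ is read as $\log_2$, in which case the constant is exactly $\sqrt{1/2}$). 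This is an inconsistency in the paper's statement rather than a gap in your proof; indeed, where the paper actually invokes this type of bound in Step~3 of the lower-bound proof it uses the constant $\sqrt{1/2}$, matching what you derive.
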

\begin{lemma}[Bretagnolle-Huber inequality]\label{lem:kl}
Let $\mathbb P$ and $\tilde{\mathbb P}$ be two probability measures on the same measurable space $(\Omega,\cF)$. Then for any event $\cD\in \cF$,
\begin{equation}\label{eqn:kl}
\mathbb P(\cD) + \tilde{\mathbb P}(\cD^c) \geq \frac{1}{2} \exp\left(-\text{KL}(\mathbb P, \tilde{\mathbb P})\right)\,,
\end{equation}
where $\cD^c$ is the complement event of $\cD$ ($\cD^c = \Omega\setminus \cD$) and $\text{KL}(\mathbb P, \tilde{\mathbb P})$ is the KL divergence between $\PP$ and $\tilde{\mathbb P}$, which is defined as $+\infty$, if $\PP$ is not absolutely continuous with respect to $\tilde{\mathbb P}$, and is $\int_\Omega d\PP(\omega) \log \frac{d\PP}{d\tilde{\mathbb P}}(\omega)$ otherwise.
\end{lemma}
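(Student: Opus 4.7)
The plan is to lower bound the left-hand side by the overlap of the two measures, and then control the overlap through the Bhattacharyya affinity $\int \sqrt{d\mathbb P \, d\tilde{\mathbb P}}$, which in turn is bounded below by $\exp(-\text{KL}/2)$ via Jensen's inequality. The whole argument takes three short steps and avoids any reduction to Bernoullis.

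First I would dispose of the degenerate case: if $\text{KL}(\mathbb P, \tilde{\mathbb P}) = +\infty$, the right-hand side of \eqref{eqn:kl} is zero and there is nothing to prove. Otherwise $\mathbb P$ is absolutely continuous with respect to $\tilde{\mathbb P}$, so fix a common dominating measure $\mu$ (e.g.\ $\mathbb P + \tilde{\mathbb P}$) and let $p = d\mathbb P/d\mu$, $q = d\tilde{\mathbb P}/d\mu$. Step~1 is the observation that pointwise $\mathbf{1}_\cD \, p + \mathbf{1}_{\cD^c} \, q \geq \min(p,q)$, so
\begin{equation*}
\mathbb P(\cD) + \tilde{\mathbb P}(\cD^c) = \int_\cD p \, d\mu + \int_{\cD^c} q \, d\mu \geq \int \min(p,q) \, d\mu \,.
\end{equation*}
This reduces the problem to lower bounding the overlap $\int \min(p,q) \, d\mu$, uniformly over $\cD$.

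Step~2 is Cauchy--Schwarz applied to the factorisation $\sqrt{pq} = \sqrt{\min(p,q)} \cdot \sqrt{\max(p,q)}$, which yields
\begin{equation*}
\left( \int \sqrt{pq} \, d\mu \right)^2 \leq \left( \int \min(p,q) \, d\mu \right) \left( \int \max(p,q) \, d\mu \right) \leq 2 \int \min(p,q) \, d\mu \,,
\end{equation*}
using $\int \max(p,q) \, d\mu \leq \int (p+q) \, d\mu = 2$. Step~3 applies Jensen's inequality to the concave map $\log$ under $\mathbb P$:
\begin{equation*}
\log \int \sqrt{pq} \, d\mu = \log \mathbb E_\mathbb P\!\left[ \sqrt{q/p} \right] \geq \mathbb E_\mathbb P\!\left[ \tfrac{1}{2}\log(q/p) \right] = -\tfrac{1}{2}\, \text{KL}(\mathbb P, \tilde{\mathbb P}) \,,
\end{equation*}
so $\int \sqrt{pq}\, d\mu \geq \exp\!\big(-\tfrac{1}{2}\text{KL}(\mathbb P,\tilde{\mathbb P})\big)$. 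Chaining the three inequalities gives $\mathbb P(\cD) + \tilde{\mathbb P}(\cD^c) \geq \tfrac{1}{2}\exp(-\text{KL}(\mathbb P,\tilde{\mathbb P}))$, as required.

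The proof is essentially routine; the only mildly delicate point is the measure-theoretic bookkeeping in Step~3, where one must restrict integration to $\{p > 0\}$ so that $q/p$ is well-defined, which is justified because $\{p = 0\}$ contributes nothing to either the $\int \sqrt{pq}\,d\mu$ integral or to $\text{KL}(\mathbb P, \tilde{\mathbb P})$. No deeper obstacle is anticipated.
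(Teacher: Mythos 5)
Your proof is correct and complete: the paper itself gives no proof of this lemma, deferring to Tsybakov (2009), and your three-step argument (lower bounding $\mathbb P(\cD)+\tilde{\mathbb P}(\cD^c)$ by the overlap $\int\min(p,q)\,d\mu$, Cauchy--Schwarz via $\sqrt{pq}=\sqrt{\min(p,q)\max(p,q)}$ together with $\int\max(p,q)\,d\mu\le 2$, and Jensen applied to $\log\mathbb E_{\mathbb P}[\sqrt{q/p}]$) is precisely the standard proof found in that reference. The degenerate infinite-KL case and the restriction of the integration to $\{p>0\}$ are both handled correctly, so there is nothing to add.
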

The proof can be found in the book of \cite{Tsybakov:2008:INE:1522486}. When $\text{KL}(\mathbb P, \tilde{\mathbb P})$ is small, we may expect the probability measure $\mathbb P$ is close to the probability measure $\tilde{\mathbb P}$. Note that $\mathbb P(\cD) + \mathbb P(\cD^c)=1$. If $\tilde{\mathbb P}$ is close to $\mathbb P$, we may expect $\mathbb P(\cD)+\tilde{\mathbb P}(\cD^c)$ to be large.

\begin{lemma}[Divergence decomposition]\label{lem:inf-processing}
Let $\mathbb P$ and $\tilde{\mathbb P}$ 
be two probability measures on the sequence  $(A_1, Y_1,\ldots,A_n,Y_n)$ for a fixed
bandit policy $\pi$ interacting with a linear contextual bandit with standard Gaussian noise and parameters $\theta$ and $\tilde{\theta}$ respectively. Then the KL divergence of $\mathbb P$ and $\tilde{\mathbb P}$ can be computed  exactly and is given by
\begin{equation}\label{eqn:inf-processing}
\text{KL}(\mathbb P, \tilde{\mathbb P}) = \frac12 \sum_{x \in \mathcal A} \mathbb E[T_x(n)]\, \langle x, \theta - \tilde{\theta}\rangle^2\,,
\end{equation}
where $\mathbb E$ is the expectation operator induced by $\PP$. 
\end{lemma}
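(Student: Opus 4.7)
}
The plan is the standard chain-rule computation for a bandit interaction, specialized to the Gaussian linear reward model. First I will fix a reference measure (say, the product of Lebesgue on the $Y_t$ and counting on the $A_t$) and write out the density of the full history under each of $\mathbb{P}$ and $\tilde{\mathbb{P}}$. Since the bandit policy $\pi$ is the same in both environments, at every round $t$ the conditional law of $A_t$ given the past $(A_1,Y_1,\ldots,A_{t-1},Y_{t-1})$ is identical under $\mathbb{P}$ and $\tilde{\mathbb{P}}$. The only factors that differ are the reward kernels: under $\mathbb{P}$, $Y_t \mid A_t \sim \mathcal{N}(\langle A_t,\theta\rangle,1)$, while under $\tilde{\mathbb{P}}$, $Y_t \mid A_t \sim \mathcal{N}(\langle A_t,\tilde\theta\rangle,1)$.

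Applying the chain rule for KL divergence, the policy factors cancel in the log-likelihood ratio, leaving
\begin{equation*}
\KL(\mathbb P,\tilde{\mathbb P}) \;=\; \sum_{t=1}^n \mathbb{E}\bigl[\KL\bigl(\mathcal{N}(\langle A_t,\theta\rangle,1)\,\big\|\,\mathcal{N}(\langle A_t,\tilde\theta\rangle,1)\bigr)\bigr].
\end{equation*}
The KL between two unit-variance Gaussians differing only in mean is $(\mu-\tilde\mu)^2/2$, so each summand equals $\tfrac12\mathbb{E}[\langle A_t,\theta-\tilde\theta\rangle^2]$.

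To reach the stated form, I rewrite the sum over rounds as a sum over the finite action set by introducing indicators: for every realization of the trajectory,
\begin{equation*}
\sum_{t=1}^n \langle A_t,\theta-\tilde\theta\rangle^2 \;=\; \sum_{x\in\mathcal A} \langle x,\theta-\tilde\theta\rangle^2 \sum_{t=1}^n \mathbb{I}\{A_t = x\} \;=\; \sum_{x\in\mathcal A} T_x(n)\,\langle x,\theta-\tilde\theta\rangle^2,
\end{equation*}
where $T_x(n)=\sum_{t=1}^n \mathbb{I}\{A_t=x\}$ is the visit count. Taking expectation under $\mathbb{P}$ and interchanging the finite sum with $\mathbb{E}$ yields exactly Eq.~\eqref{eqn:inf-processing}.

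There is no real obstacle here: the result is a textbook computation (cf.\ Lattimore--Szepesv\'ari, Chap.~15). The only point requiring a small amount of care is the cancellation of the policy factors in the Radon--Nikodym derivative, which relies on $\pi$ being the \emph{same} (possibly history-dependent) policy under both $\mathbb{P}$ and $\tilde{\mathbb{P}}$; this is explicit in the hypothesis. A second minor point is that the sum over $x\in\mathcal A$ is finite (since $\mathcal A$ is a finite action set in this paper), so no measure-theoretic subtlety arises when exchanging the sum and the expectation.
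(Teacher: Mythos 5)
Your proof is correct and follows exactly the standard divergence-decomposition argument that the paper relies on: the paper does not prove this lemma itself but defers to Lemma 15.1 of \cite{lattimore2020bandit}, whose proof is precisely the chain-rule computation with cancellation of the policy factors and the unit-variance Gaussian KL formula that you carry out. No differences to report.
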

This lemma appeared as Lemma 15.1 in the book of \cite{lattimore2020bandit}, where the reader can also find the proof.

\begin{lemma}[Lemma 20 in \cite{jaksch2010near}]\label{lemma:supp1}
Suppose $0\leq q\leq 1/2$ and $\epsilon\leq 1-2q$, then
\begin{equation*}
    q\log\Big(\frac{q}{q+\epsilon}\Big)+(1-q)\log\Big(\frac{1-q}{1-q-\epsilon}\Big)\leq \frac{2\epsilon^2}{q}\,.
\end{equation*}
\end{lemma}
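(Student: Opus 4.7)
The plan is to recognize the left-hand side as the KL divergence $\mathrm{KL}(\mathrm{Ber}(q)\,\|\,\mathrm{Ber}(q+\epsilon))$ between two Bernoulli distributions, and then control the two summands by second-order logarithm inequalities whose remainders stay finite only because of the hypothesis $\epsilon \le 1-2q$.

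Concretely, I would first rewrite the expression as
\[
-q\log(1+\epsilon/q)\;-\;(1-q)\log\bigl(1-\epsilon/(1-q)\bigr),
\]
which is valid provided $q+\epsilon\in(0,1)$, and handle each summand separately. For the first summand, the elementary inequality $\log(1+u)\ge u-u^2/2$ for $u\ge 0$ (verified by noting that $f(u):=\log(1+u)-u+u^2/2$ has $f(0)=0$ and $f'(u)=u^2/(1+u)\ge 0$) applied with $u=\epsilon/q$ gives $-q\log(1+\epsilon/q)\le -\epsilon+\epsilon^2/(2q)$. For the second summand I would use $-\log(1-v)\le v+\tfrac{v^2}{2(1-v)}$ for $v\in[0,1)$, obtained by expanding $-\log(1-v)-v=\sum_{k\ge 2}v^k/k$ and bounding the tail termwise by the geometric series $\tfrac{v^2}{2}\sum_{k\ge 0}v^k=\tfrac{v^2}{2(1-v)}$; taking $v=\epsilon/(1-q)$ yields $-(1-q)\log(1-\epsilon/(1-q))\le \epsilon+\tfrac{\epsilon^2}{2(1-q-\epsilon)}$.

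This is where the hypothesis enters. The assumption $\epsilon\le 1-2q$ rearranges to $1-q-\epsilon\ge q$, so $\tfrac{\epsilon^2}{2(1-q-\epsilon)}\le \tfrac{\epsilon^2}{2q}$. Adding the two bounds, the first-order terms $-\epsilon$ and $+\epsilon$ cancel (as they must, since $\epsilon\mapsto\mathrm{KL}(\mathrm{Ber}(q)\,\|\,\mathrm{Ber}(q+\epsilon))$ has vanishing derivative at $\epsilon=0$), and the stated conclusion follows from
\[
\tfrac{\epsilon^2}{2q}+\tfrac{\epsilon^2}{2q}\;=\;\tfrac{\epsilon^2}{q}\;\le\;\tfrac{2\epsilon^2}{q}.
\]

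There is no serious obstacle: this is essentially a Taylor-remainder calculation, and the factor of $2$ on the right-hand side is slack (the argument above in fact delivers the tighter bound $\epsilon^2/q$). The only point deserving care is the second logarithm, where a naive estimate like $-\log(1-v)\le v+v^2$ would require $v$ bounded away from $1$; the genuine remainder scales like $1/(1-v)$ and would diverge as $\epsilon\uparrow 1-q$, which is exactly why the hypothesis $\epsilon\le 1-2q$ is needed, to keep $1-q-\epsilon$ above $q$ and thereby recover the clean $1/q$ dependence.
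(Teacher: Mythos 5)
The paper does not prove this lemma at all---it is quoted verbatim from Jaksch et al.\ (2010) in the supporting-lemmas appendix---so there is no in-paper argument to compare against; your derivation is correct and complete as a standalone proof, and the two remainder bounds $\log(1+u)\ge u-u^2/2$ and $-\log(1-v)\le v+\tfrac{v^2}{2(1-v)}$, combined with the observation that $\epsilon\le 1-2q$ gives $1-q-\epsilon\ge q$, are exactly the right ingredients (and indeed yield the sharper constant $\epsilon^2/q$). The only caveat is that you tacitly assume $\epsilon\ge 0$, which the statement as quoted omits but which is genuinely required (for $\epsilon$ near $-q$ the left-hand side diverges while the right-hand side stays bounded), so this is a defect of the lemma's statement rather than of your argument.
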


\begin{lemma}[Pinsker's inequality]\label{lemma:pinsker}
For measures $P$ and $Q$ on the same probability space $(\Omega, \cF)$, we have 
\begin{equation*}
    \delta(P, Q) = \sup_{A\in\cF}(P(A)-Q(A))\leq \sqrt{\frac{1}{2}\KL(P, Q)}.
\end{equation*}
\end{lemma}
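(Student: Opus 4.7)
The plan is to prove Pinsker's inequality by the classical two-step reduction: first collapse the general inequality to a statement about two-point (Bernoulli) distributions via the data-processing inequality for KL, and then prove the Bernoulli case by calculus.

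For the reduction step, fix an arbitrary event $A \in \cF$ and consider the indicator random variable $X = \mathbf{1}_A$. Its push-forwards under $P$ and $Q$ are the Bernoulli laws $P_X = \mathrm{Bern}(p)$ and $Q_X = \mathrm{Bern}(q)$, where $p = P(A)$ and $q = Q(A)$. The data-processing inequality for KL gives $\KL(P_X, Q_X) \leq \KL(P, Q)$, so it suffices to prove the Bernoulli case
\begin{equation*}
2(p - q)^2 \;\leq\; p \log\tfrac{p}{q} + (1-p)\log\tfrac{1-p}{1-q}
\end{equation*}
uniformly for $p, q \in (0,1)$, and then take the supremum over $A$. (Edge cases where $p$ or $q$ is $0$ or $1$ are handled by continuity/convention, and if $P$ is not absolutely continuous with respect to $Q$ on $A$ then $\KL(P,Q) = +\infty$ and the inequality is trivial.)

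For the Bernoulli case, fix $q \in (0,1)$ and define
\begin{equation*}
g(p) \;=\; p\log\tfrac{p}{q} + (1-p)\log\tfrac{1-p}{1-q} - 2(p-q)^2
\end{equation*}
on $p \in (0,1)$. A direct computation gives $g(q) = 0$, $g'(p) = \log(p/q) - \log((1-p)/(1-q)) - 4(p-q)$ so $g'(q) = 0$, and
\begin{equation*}
g''(p) \;=\; \tfrac{1}{p} + \tfrac{1}{1-p} - 4 \;=\; \tfrac{1}{p(1-p)} - 4 \;\geq\; 0,
\end{equation*}
where the last inequality uses $p(1-p) \leq 1/4$. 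Hence $g$ is convex with a critical point at $p = q$ attaining value $0$, so $g \geq 0$ on $(0,1)$, which is exactly what we need. Combining with the reduction gives $(P(A) - Q(A))^2 \leq \tfrac{1}{2}\KL(P, Q)$ for every $A$, and taking $\sup_A$ completes the proof.

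The only mild obstacle is bookkeeping for the degenerate cases (absolute continuity, $q \in \{0, 1\}$), which are all resolved by continuity of the divergence and the convention $0 \log 0 = 0$; the core analytic content is entirely contained in the one-line bound $g''(p) \geq 0$.
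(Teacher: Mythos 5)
Your proof is correct: the reduction to the Bernoulli case via the data-processing inequality, followed by the second-derivative argument showing $g''(p) = \tfrac{1}{p(1-p)} - 4 \ge 0$, is the classical textbook proof of Pinsker's inequality, and all the computations check out. The paper itself states this lemma as a standard supporting fact without providing a proof, so there is nothing to compare against; your argument fills that gap in the standard way.
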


\bibliographystyle{plainnat}
{\small
\bibliography{ref}
}

\end{document}